 \newtcolorbox{assbox}{colback=black!5!white,colframe=black!75!black}
  \newtcolorbox{thmbox}{colback=red!5!white,colframe=red!75!black}
\newcommand{\OT}{\mathrm{OT}}
\newcommand{\EOT}{\mathrm{EOT}}
\newcommand{\reg}{\mathrm{reg}}
\newcommand{\onl}{\mathrm{onl}}
\newcommand{\RR}{\mathbb{R}}
\newcommand{\NN}{\mathbb{N}}
\newcommand{\Cc}{\mathcal{C}}
\newcommand{\Nn}{\mathcal{N}}
\newcommand{\Uu}{\mathcal{U}}
\newcommand{\Vv}{\mathcal{V}}
\def\ones{\mathds{1}}
\DeclareMathOperator{\diag}{diag}
\newtheorem{theorem}{Theorem}[section]
\newtheorem{proposition}[theorem]{Proposition}
\newtheorem{lemma}[theorem]{Lemma}
\newtheorem{remark}[theorem]{Remark}
\DeclareMathOperator{\KL}{KL}
\DeclareMathOperator*{\argmin}{arg\,min}
\title{
Annealed Sinkhorn for Optimal Transport:\\ 
convergence, regularization path and debiasing
}
\author{
L\'ena\"ic Chizat\thanks{Ecole Polytechnique Fédérale de Lausanne (EPFL), Institute of Mathematics, 1015 Lausanne, Switzerland. \texttt{lenaic.chizat@epfl.ch}}
}
\begin{document}
\maketitle
\begin{abstract}
Sinkhorn's algorithm is a method of choice to solve large-scale optimal transport (OT) problems. In this context, it involves an inverse temperature parameter $\beta$ that determines the speed-accuracy trade-off. 
To improve this trade-off, practitioners often use a variant of this algorithm, \emph{Annealed Sinkhorn}, that uses an nondecreasing sequence $(\beta_t)_{t\in \NN}$ where $t$ is the iteration count.
However, besides for the schedule $\beta_t=\Theta(\log t)$ which is impractically slow, it is not known whether this variant is guaranteed to actually solve OT.
Our first contribution answers this question: we show that a concave annealing schedule asymptotically solves OT if and only if  $\beta_t\to+\infty$ and $\beta_t-\beta_{t-1}\to 0$. 
The proof is based on an equivalence with Online Mirror Descent and further suggests that the iterates of Annealed Sinkhorn follow the solutions of a sequence of relaxed, entropic OT problems, the \emph{regularization path}.
An analysis of this path reveals that, in addition to the well-known ``entropic'' error in $\Theta(\beta^{-1}_t)$, the annealing procedure induces a ``relaxation'' error in $\Theta(\beta_{t}-\beta_{t-1})$. The best error trade-off is achieved with the schedule $\beta_t = \Theta(\sqrt{t})$ which, albeit slow, is a universal limitation of this method. 
Going beyond this limitation, we propose a simple modification of Annealed Sinkhorn that reduces the relaxation error, and therefore enables faster annealing schedules. In toy experiments, we observe the effectiveness of our Debiased Annealed Sinkhorn's algorithm: a single run of this algorithm spans the whole speed-accuracy Pareto front of the standard Sinkhorn's algorithm.
\end{abstract}
 
\section{Introduction}
The optimal transport (OT) problem is a
linear program with a rich structure and a wide range of applications. In the discrete case, it involves two probability measures $p\in \Delta^*_n$ and $q\in \Delta_m^*$ where $\Delta^*_n \coloneqq \{ p\in (\RR_+^*)^n,\; p^\top \ones =1 \}$ is the tipless probability simplex, and a matrix $c\in \RR^{m\times n}$, where $c_{ij}$ represents the cost of exchanging a unit of mass between $p_i$ and $q_j$. The goal of the OT problem is to find a plan to transport the mass from $p$ to $q$ while minimizing the total transport cost. In mathematical terms, this is amounts to solving
\begin{align}\label{eq:OT}
\OT(p,q) \coloneqq \min_{\pi \in \Gamma(p,q)} \langle c,\pi\rangle,
\end{align}
where $\Gamma(p,q)\coloneqq\{\pi \in \RR^{m\times n}_+\;,\; \pi\ones = p \text{ and } \pi^\top \ones = q \}$ is the set of transport plans. In this paper, we focus on the discrete case for simplicity, but we will not rely fundamentally on the discrete structure and all our developments could be adapted to the continuous case.

An exact minimizer of the OT problem~\eqref{eq:OT} can be found in time $\tilde O((n+m)nm)$ via the network simplex algorithm or other specialized algorithms (see~\cite[Chap.~3]{peyre2019computational}). However, this can be prohibitively expensive for large scale problems where $m$ and $n$ are of the order of tens of thousands or more. This has motivated the development of algorithms to find approximate solutions with improved dependency in the size of the problem.

\paragraph{Sinkhorn's algorithm} One such algorithm, advocated for e.g.~in~\citep{kosowsky1991solving, cuturi2013sinkhorn} is Sinkhorn's algorithm, which solves the entropy-regularized optimal transport (EOT) problem
\begin{align}\label{eq:EOT}
\EOT_\beta(p,q)\coloneqq
 \min_{\pi \in \Gamma(p,q)} \langle c,\pi\rangle + \beta^{-1} \KL(\pi|\pi^{\mathrm{ref}})
\end{align}
where $\beta>0$ is the inverse temperature, $\KL$ is the Kullback-Leibler divergence and we take $\pi^{\mathrm{ref}}=(mn)^{-1}\ones_m \ones^\top_n$ by convention. It is not difficult to see that the unique minimizer of~\eqref{eq:EOT} is of the form $\pi^* = \diag(a^*)K\diag(b^*)$ for some $a^*\in (\RR_+^*)^m$, $b^*\in (\RR_+^*)^n$ and for $K=e^{-\beta c}\in \RR^{m\times n}_+$ (see Lem~\ref{lem:gibbs-plan-optimal}). Starting from $a_0=\ones$ and $b_0=\ones$, Sinkhorn's algorithm is equivalent to alternating $\KL$-projections of the ``primal'' iterate $\pi_t=\diag(a_t)K\diag(b_t)$ on the marginal constraints, which leads to the recursion
\begin{align}\label{eq:sinkhorn}
a_t = p \oslash (K b_{t-1}), &&
b_t = q \oslash (K^\top a_{t}).
\end{align}
These iterations have been rediscovered several times and bear various names in various communities (IPFP, RAS, matrix scaling, see the reviews~\citep{idel2016review, peyre2019computational}). When the final goal is to solve~\eqref{eq:OT}, the parameter $\beta$ determines the speed-accuracy trade-off: if chosen too large, it leads to a high regularization error; if chosen too small, it leads to a slow convergence of Sinkhorn's iterations (see a quantitative discussion in Section~\ref{sec:opt-quantitative}). While in this paper, we take as our end-goal to solve~\eqref{eq:OT}, let us mention that many works, starting from~\cite{cuturi2013sinkhorn}, have shown that solving~\eqref{eq:EOT} instead of~\eqref{eq:OT} is often beneficial in applications.

\paragraph{Annealed Sinkhorn} In the hope to improve the speed-accuracy trade-off, a classical heuristic, already put forth in \citep{kosowsky1991solving} and implemented in most computational OT packages\footnote{It is for instance implemented --  with a schedule of the form $\beta_t = \min \{\sigma^t, \beta_{\max} \}$ for some $\sigma>1$ -- in~\citep{flamary2021pot, feydy2019geomloss, cuturi2022optimal}, under the name ``$\epsilon$-scaling'' or ''$\epsilon$-scheduler''.}, consists in using a nondecreasing sequence $(\beta_t)_{t\in \NN}$ of inverse temperatures that depend on the iteration count $t$, a procedure known as (simulated) annealing.
This results in Alg.~\ref{alg:annealed-sinkhorn}, which we will refer to as \emph{Annealed Sinkhorn}.  The standard Sinkhorn's iterations~\eqref{eq:sinkhorn} correspond to the particular case where $(\beta_t)$ is constant. While the idea of annealing appears natural, it still lacks satisfying theoretical guarantees in this context. The only result that we are aware of, in~\citep{sharify2011solution}, shows that $(\pi_t)$ converges to an optimal transport plan for certain logarithmic annealing schedules $\beta_t = \Theta(\log t)$, which are impractically slow. Towards understanding this method, an analysis of the stability of dual solutions of~\eqref{eq:EOT} under changes of $\beta$ is carried out in~\cite[Sec.~3.2]{schmitzer2019stabilized}. This analysis gives indications on the number of Sinkhorn's iterations needed to steer the dual variables from $\beta$ to $\beta'>\beta$, but that work leaves open the question of convergence of the whole scheme. Relatedly, ~\cite{xie2020fast} studies the convergence of an inexact proximal point algorithm, which leads to iterates of the form of Alg.~\ref{alg:annealed-sinkhorn}. However, their theoretical approach requires to approximately solve~\eqref{eq:EOT} at each temperature, i.e.~$(\beta_t)$ must be piecewise constant, with no explicit bound on the time needed to be spent at each temperature.
Overall, the convergence analysis of Annealed Sinkhorn remains an open question. In~\cite[Chap.~3, p135]{feydy2020geometric}, it is presented as one of ``two major theoretical questions [...] still left to be answered [in the computational theory of entropic OT]''.

\begin{algorithm}
\caption{Annealed Sinkhorn (when $\beta_t\gg1$, should be implemented in variables $(u_t,v_t) = (\log(a_t),\log(b_t))$ with stabilized log-sum-exp operations to avoid numeric overflow).}\label{alg:annealed-sinkhorn}
\begin{enumerate}
\item \textbf{Input}: probability vectors $p\in \Delta_m^*$, $q\in \Delta_n^*$, cost $c\in \RR^{m\times n}$, annealing schedule $(\beta_t)_{t\geq 0}$
\item \textbf{Initialize}: let $b_0=\ones \in \RR^n$ and $K_0=e^{-\beta_0 c}\in \RR^{m\times n}$
\item \textbf{For $t=1,2,\dots$ let}
\begin{align*}
a_{t} &= p \oslash (K_{t-1} b_{t-1}), && \text{{\color{gray}$\#$ project on 1st marginal constraint} }\\
K_{t} &= e^{-\beta_{t} c},&& \text{{\color{gray}$\#$ update inverse temperature} }\\
b_{t} &= q\oslash (K_{t}^\top a_{t})&& \text{{\color{gray}$\#$ project on 2nd marginal constraint} }\\
\pi_t &= \diag(a_t)K_t\diag(b_t)&& \text{{\color{gray}$\#$ define primal iterate (can be done offline)} }\\
\end{align*}
\end{enumerate}
\end{algorithm}

\paragraph{Contributions} Towards addressing this open question, we make the following contributions:
\begin{itemize}
\item in Section~\ref{eq:consistency}, we characterize the family of positive concave annealing schedules $(\beta_t)_{t\in \NN}$ such that Annealed Sinkhorn asymptotically solves OT (Thm.~\ref{thm:AS-qualitative}). More specifically, we show that $(\pi_t)$ converges to an OT plan if and only if $\beta_t\to +\infty$ and $\beta_t-\beta_{t-1}\to 0$. Our proof is based on an online mirror descent (OMD) interpretation of Annealed Sinkhorn derived in Lem.~\ref{lem:OMD-sinkhorn}.
\item in Section~\ref{sec:path}, we study the \emph{regularization path} $(\pi^\mathrm{reg}_t)$, which is a tractable proxy for the iterates $(\pi_t)$ suggested by the OMD analysis. Thm.~\ref{thm:regularization-path} studies the convergence speed of $(\pi_t^\reg)$ towards OT solutions, and exhibits two types of errors: a well-known \emph{entropic} error, in $O(\beta_t^{-1})$ and a \emph{relaxation} error, in $\Theta(\beta_t-\beta_{t-1})$. This second term shows that to solve OT, it is necessary to slow down annealing. The best error trade-off is achieved with the schedule $\beta_t \propto t^{1/2}$.
\item Going beyond this limitation, we propose a simple modification, Debiased Annealed Sinkhorn (Alg.~\ref{alg:debiased-annealed-sinkhorn}), that reduces the relaxation error and therefore enables faster annealing schedules. It is justified by a technique to remove the first order relaxation error of the regularization path, presented in Prop.~\ref{prop:asymptotic-debiasing}. With a negligible extra cost, this technique is observed to systematically improve the behavior of Annealed Sinkhorn in our experiments.
\item Finally, we explain how to adapt our analysis to the Symmetric Sinkhorn algorithm in Section~\ref{sec:symmetric} where, interestingly, the regularization path solves a sequence of entropic \emph{unbalanced} OT problems.
\end{itemize}

\begin{figure}
\centering
\begin{subfigure}{0.5\linewidth}
\centering
\includegraphics[scale=0.47]{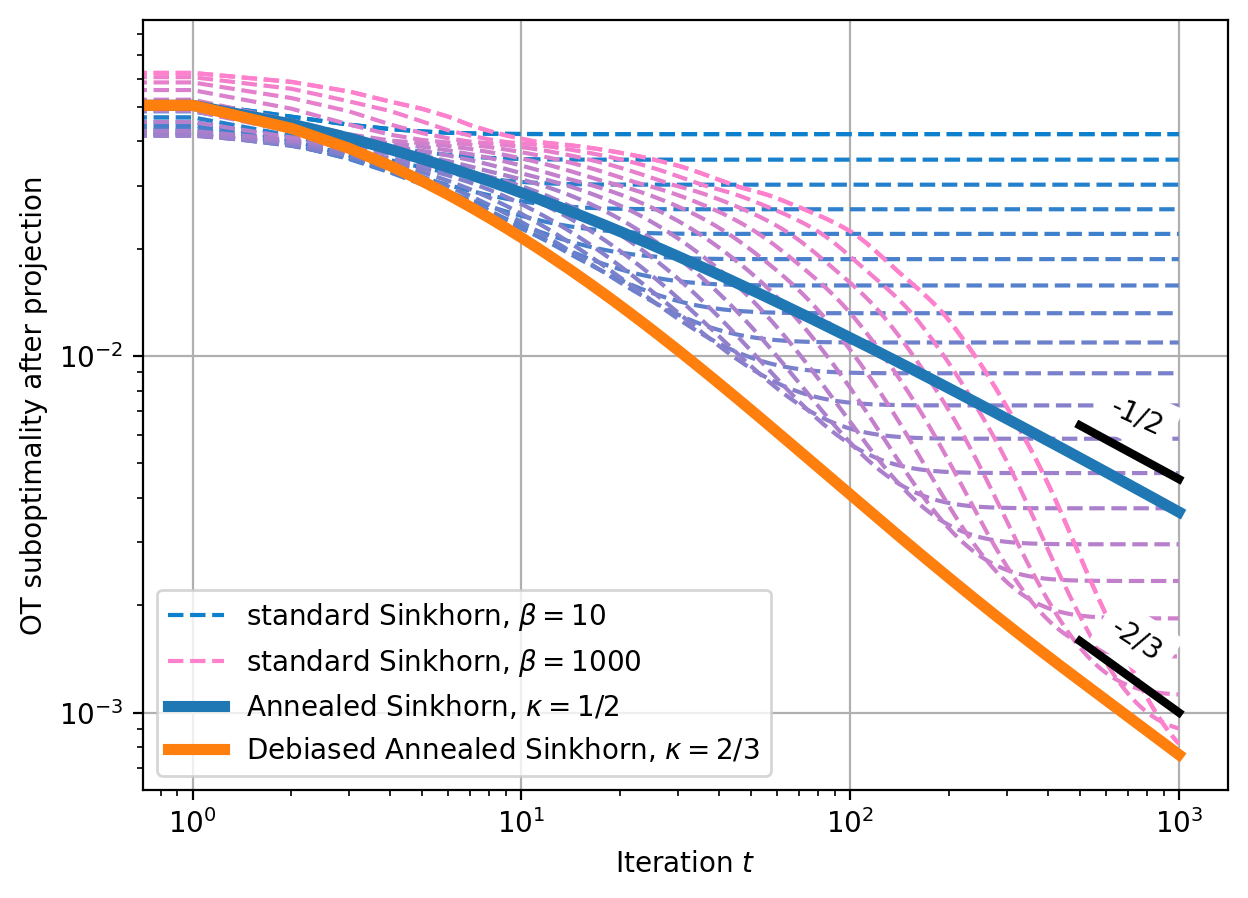}
\caption{Geometric cost ($c_{i,j}=\Vert x_i-y_j\Vert_2^2, \; x_i, y_j\in\RR^2$)}\label{fig:geometric}
\end{subfigure}%
\begin{subfigure}{0.5\linewidth}
\centering
\includegraphics[scale=0.47]{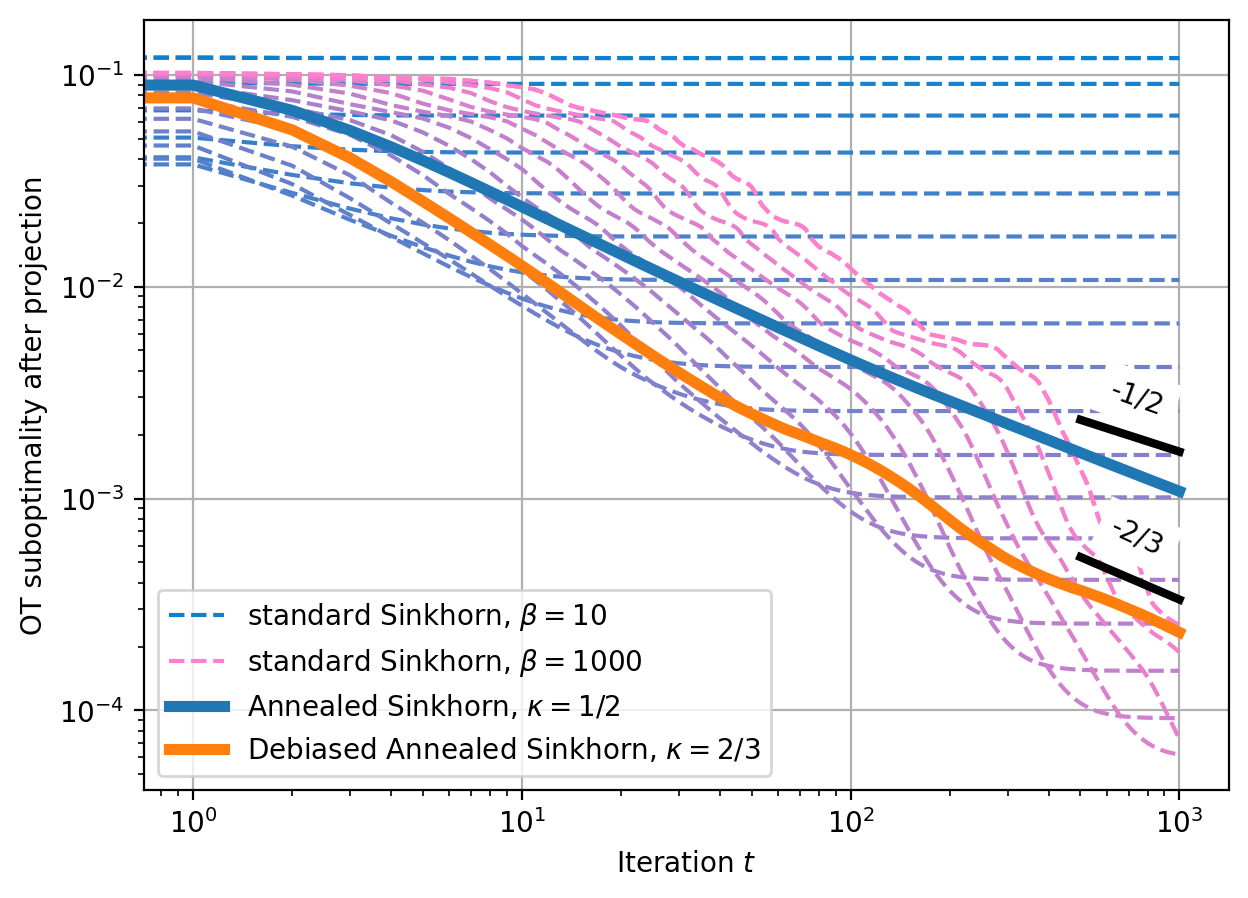}
\caption{Unstructured cost ($c_{i,j}\overset{\mathrm{iid}}{\sim} \Nn(0,1)$)}\label{fig:unstructured}
\end{subfigure}
\caption{Comparison of Sinkhorn's algorithm and its annealed variants for their respective optimal annealing schedules of the form $\beta_t=\beta_0 (1+t)^{\kappa}$ (here $\beta_0=(10/\Vert c\Vert_\mathrm{osc})$). We plot the OT suboptimality after projecting $\pi_t$ on $\Gamma(p,q)$ via Alg.~\ref{alg:projection}. The speed-accuracy Pareto front for Sinkhorn's algorithm is the pointwise minimum of the dashed lines. While Annealed Sinkhorn is far away from this front, the debiased version that we propose approaches or beats it.}\label{fig:pareto}
\end{figure}

The numerical experiments on Figure~\ref{fig:pareto} summarize our key insights. For Sinkhorn's algorithm and some of its variants discussed in this paper, we plot the OT suboptimality after ``projecting" the iterate $\pi_t$ on $\Gamma(p,q)$ via a cheap ``levelling'' procedure (Alg.~\ref{alg:projection}), i.e.~we plot $\langle c, \mathrm{proj}_{\Gamma(p,q)}(\pi_t)\rangle -\OT(p,q)$. The speed-accuracy Pareto front for Sinkhorn's algorithm is defined as the pointwise minimum of the dashed curves; each point in this front is achieved for a different value of $\beta$, and thus for a different run of the algorithm. Annealed Sinkhorn with its optimal schedule $\beta_t = \Theta(t^{1/2})$ is observed to converge at a rate $\Theta(t^{-1/2})$ (which is the rate we prove for the regularization path in Thm.~\ref{thm:regularization-path}) and often stays behind this Pareto front. In contrast, the proposed Debiased Annealed Sinkhorn can handle faster schedules -- here $\beta_t = \Theta(t^{2/3})$ -- and a single run of  this algorithm approaches or even beats the whole Pareto front (depending on the setting). We also observe that annealing is more beneficial in geometric contexts, where there is ``multiscale'' structure to exploit, but the theory presented in this present paper does not distinguish between various costs structures. Details on the experimental settings can be found in App.~\ref{app:experiments}. Let us mention that Sinkhorn's Pareto front can be improved by using acceleration methods (as, e.g.~in~\cite{thibault2021overrelaxed}). We do not discuss acceleration in the present paper but it is an interesting avenue for future research to study the interplay between acceleration and annealing.

\subsection{Notation}  We denote by $\ones_m$, or simply $\ones$ when the size is clear from the context, the vector of all ones in $\RR^m$. The Frobenius inner product between matrices or vectors is denoted by $\langle c,\pi\rangle=\sum_{i,j} c_{ij}\pi_{ij}$. For $a\in (\RR_+)^d$ and $b\in (\RR_+^*)^d$, the Kullback-Leibler divergence (a.k.a~relative entropy) is defined as $\KL(a|b)=\sum_i a_i\log(a_i/b_i) -a_i +b_i$ and it is jointly convex. We denote by $\odot$ and $\oslash$ the entry-wise product and division between vectors or between matrices of matching dimensions. All exponentials and logarithms act entry-wise on matrices and vectors. Recall that $\Gamma(p,q)$ denotes the set of transport plans between $p$ and $q$ (defined after Eq.~\eqref{eq:OT}) and we write $\Gamma(\ast,q)$ for the set of plans that only satisfy the second marginal constraint. The oscillation semi-norm of the cost matrix $c\in \RR^{m\times n}$ is defined as $\Vert c\Vert_\mathrm{osc} = \max_{i,j} c_{i,j} - \min_{i,j} c_{i,j}$. For a convex set $\Cc \subset \RR^d$, we denote by $\iota_\Cc$ the associated convex indicator function, defined as $\iota_{\Cc}(x)=0$ if $x\in \Cc$ and $+\infty$ otherwise.

\section{Convergence of Annealed Sinkhorn}\label{eq:consistency}

\subsection{The qualitative picture}
Our first result characterizes the asymptotic behavior of Annealed Sinkhorn (Alg.~\ref{alg:annealed-sinkhorn}) for \emph{concave} annealing schedules.

\begin{theorem}[Convergence of Annealed Sinkhorn]\label{thm:AS-qualitative}
Let $(\pi_t)$ be the sequence generated by Annealed Sinkhorn (Alg.~\ref{alg:annealed-sinkhorn}) with a positive, nondecreasing and \emph{concave} annealing schedule $(\beta_t)$, that is such that its difference sequence $\alpha_t = \beta_{t}-\beta_{t-1}$ in nonnegative and nonincreasing. Let $\lim \beta_t =\beta_\infty \in {]0,\infty]}$ and $\lim \alpha_t = \alpha_{\infty}\in {[0,+\infty[}$. Then any accumulation point $\pi_\infty$ of $(\pi_t)$ (at least one is guaranteed to exist) is a minimizer of
\begin{align}\label{eq:doubly-regularized}
\min_{\pi \in \Gamma(\ast,q)}
F_{\alpha_\infty,\beta_\infty}(\pi)&&\text{where}&& F_{\alpha, \beta}(\pi) = \langle c,\pi\rangle +\frac{1}{\alpha} \KL(\pi\ones|p) +\frac{1}{\beta} \KL(\pi|\pi^\mathrm{ref}).
\end{align}
In particular, $\pi_\infty$ is an optimal transport plan if and only if $\beta_\infty=+\infty$ and $\alpha_\infty=0$.
\end{theorem}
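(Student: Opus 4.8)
\emph{Proof proposal.} The plan is to base everything on the Online Mirror Descent reformulation of Lemma~\ref{lem:OMD-sinkhorn}, turn it into a descent/regret inequality, use it to show that a suitable regularized energy of the iterates stabilizes, identify its limit with the optimal value of~\eqref{eq:doubly-regularized}, and conclude by lower semicontinuity. First, since the $b_t$-update enforces $\pi_t^\top\ones=q$, one has $\pi_t\in\Gamma(\ast,q)$ for every $t\geq1$, and $\Gamma(\ast,q)$ is compact (all entries lie in $[0,1]$); this already gives the existence of at least one accumulation point. By Lemma~\ref{lem:OMD-sinkhorn}, unwinding the two $\KL$-projections and the temperature update in a single Annealed Sinkhorn step exhibits it as a proximal mirror step
\[
\pi_t=\argmin_{\pi\in\Gamma(\ast,q)}\Big\{\alpha_t\langle c,\pi\rangle+\KL(\pi\ones\,|\,p)+D(\pi\,|\,\pi_{t-1})\Big\},
\]
where $D(\pi\,|\,\pi'):=\KL(\pi\,|\,\pi')-\KL(\pi\ones\,|\,\pi'\ones)$ is the conditional relative entropy, a Bregman divergence that is nonnegative and bounded on $\Gamma(\ast,q)$. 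The initialization at $K_0=e^{-\beta_0c}$ together with the cumulative temperature updates produce the term $\tfrac1{\beta_t}\KL(\pi\,|\,\pi^{\mathrm{ref}})$, which is why $F_{\alpha_t,\beta_t}$ is the relevant objective at step $t$; note also that $\tfrac1{\beta_t}\KL(\pi_t\,|\,\pi^{\mathrm{ref}})\in[0,\tfrac{\log(mn)}{\beta_t}]$, since $\pi_t$ is a probability measure on the $mn$ cells.

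The three-point lemma for this proximal step gives, for every $\pi\in\Gamma(\ast,q)$,
\[
\alpha_t\langle c,\pi_t-\pi\rangle+\KL(\pi_t\ones\,|\,p)-\KL(\pi\ones\,|\,p)\leq D(\pi\,|\,\pi_{t-1})-D(\pi\,|\,\pi_t)-D(\pi_t\,|\,\pi_{t-1}).
\]
Summing and telescoping, and using $\sum_{t\leq T}\alpha_t=\beta_T-\beta_0$, the entropic bound above, and $\sum_{t\leq T}\alpha_t/\beta_t\leq\log(\beta_T/\beta_0)$, one deduces — when $\beta_\infty=+\infty$, taking as comparator a minimizer of~\eqref{eq:doubly-regularized}, using $F_{\alpha_t,\beta_t}\to F_{\alpha_\infty,\beta_\infty}$ pointwise and $\log\beta_T=o(\beta_T)$ — that the $\alpha$-weighted Cesàro average of $F_{\alpha_t,\beta_t}(\pi_t)$ is asymptotically no larger than $\min_{\Gamma(\ast,q)}F_{\alpha_\infty,\beta_\infty}$. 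Taking instead $\pi=\pi_{t-1}$ yields the one-step inequality $\alpha_t\langle c,\pi_t\rangle+\KL(\pi_t\ones|p)\leq\alpha_t\langle c,\pi_{t-1}\rangle+\KL(\pi_{t-1}\ones|p)$.

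To pass from averages to accumulation points, divide the one-step inequality by $\alpha_t$: the quantity $a_t:=\langle c,\pi_t\rangle+\tfrac1{\alpha_t}\KL(\pi_t\ones\,|\,p)$ satisfies $a_t\leq a_{t-1}+\big(\tfrac1{\alpha_t}-\tfrac1{\alpha_{t-1}}\big)\KL(\pi_{t-1}\ones\,|\,p)$, where the \emph{concavity} of $(\beta_t)$ is used decisively through $\tfrac1{\alpha_t}-\tfrac1{\alpha_{t-1}}\geq0$. When $\alpha_\infty>0$ the factors $\tfrac1{\alpha_t}$ are bounded, so $\sum_t\big(\tfrac1{\alpha_t}-\tfrac1{\alpha_{t-1}}\big)<\infty$, hence $a_t$ is almost monotone and thus convergent; when $\alpha_\infty=0$ the same conclusion requires the sharper bound $\KL(\pi_t\ones\,|\,p)=O(\alpha_t^2)$ (the ``relaxation error''), which must be extracted from a slow-variation analysis of the dual scalings entering $\log\tfrac{(\pi_t\ones)_i}{p_i}=\log\tfrac{(K_tb_t)_i}{(K_{t-1}b_{t-1})_i}$. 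Once $a_t$ converges, so does $F_{\alpha_t,\beta_t}(\pi_t)=a_t+\tfrac1{\beta_t}\KL(\pi_t\,|\,\pi^{\mathrm{ref}})$ (the last term $\to0$ when $\beta_\infty=+\infty$), its limit must coincide with the Cesàro limit above, and together with the matching lower bound $F_{\alpha_t,\beta_t}(\pi_t)\geq\inf_{\pi\in\Gamma(\ast,q)}\big[\langle c,\pi\rangle+\tfrac1{\alpha_t}\KL(\pi\ones|p)\big]\uparrow\min_{\Gamma(\ast,q)}F_{\alpha_\infty,\beta_\infty}$ it equals $\min_{\Gamma(\ast,q)}F_{\alpha_\infty,\beta_\infty}$. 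Then for any convergent subsequence $\pi_{t_k}\to\pi_\infty$, lower semicontinuity of $(\pi,s,r)\mapsto\langle c,\pi\rangle+s\,\KL(\pi\ones|p)+r\,\KL(\pi|\pi^{\mathrm{ref}})$ (with $s=\alpha_{t_k}^{-1}\to\alpha_\infty^{-1}$, possibly $+\infty$, in which case the penalty passes to $\iota_{\Gamma(p,\ast)}$, and $r=\beta_{t_k}^{-1}\to\beta_\infty^{-1}$, possibly $0$) gives $F_{\alpha_\infty,\beta_\infty}(\pi_\infty)\leq\liminf_k F_{\alpha_{t_k},\beta_{t_k}}(\pi_{t_k})=\min_{\Gamma(\ast,q)}F_{\alpha_\infty,\beta_\infty}$, so $\pi_\infty$ minimizes~\eqref{eq:doubly-regularized}. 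The remaining case $\beta_\infty<+\infty$ forces $\alpha_\infty=0$ (else $\beta_t\to\infty$) and is easier, since $F_{\alpha_t,\beta_t}$ is then uniformly strongly convex and the scheme reduces asymptotically to standard Sinkhorn at temperature $\beta_\infty$. The delicate point — the main obstacle — is exactly this upgrade: it is where concavity is indispensable, and, in the regime $\alpha_\infty=0$, it rests on the quantitative control of $\KL(\pi_t\ones\,|\,p)$, which does not follow from the soft regret estimate alone.

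Finally, for the ``in particular'': $\argmin_{\Gamma(\ast,q)}F_{\alpha_\infty,\beta_\infty}$ consists of optimal transport plans if and only if $\beta_\infty=+\infty$ and $\alpha_\infty=0$. If so, then with the convention $\tfrac10\cdot0=0$ the functional is $\pi\mapsto\langle c,\pi\rangle+\iota_{\Gamma(p,\ast)}(\pi)$, whose minimization over $\Gamma(\ast,q)$ is exactly the minimization of $\langle c,\cdot\rangle$ over $\Gamma(p,q)$, i.e.\ problem~\eqref{eq:OT}. Conversely, if $\beta_\infty<+\infty$ the strictly convex term $\tfrac1{\beta_\infty}\KL(\cdot\,|\,\pi^{\mathrm{ref}})$ makes the minimizer unique and fully supported, hence not an (in general sparse) OT plan; and if $\alpha_\infty>0$ the minimizer need not lie in $\Gamma(p,q)$ at all. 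Combined with the previous step, this yields the stated equivalence.
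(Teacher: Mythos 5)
Your overall strategy---building on the OMD reformulation of Lemma~\ref{lem:OMD-sinkhorn}, summing a regret-type inequality, and concluding by lower semicontinuity---is aligned with the paper's. But the two proofs diverge on how to pass from the regret bound to a statement about a single accumulation point, and your route has a genuine gap.

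The paper takes limits directly in the OMD inequality~\eqref{eq:OMD-sinkhorn-pos} with two different comparators. In the case $\alpha_\infty=0$, it plugs in any $\pi^*\in\Gamma(p,q)$ and extracts \emph{only} that $\KL(\pi_\infty\ones\,|\,p)=0$, i.e.~$\pi_\infty\in\Gamma(p,q)$; it never attempts to control $\KL(\pi_t\ones\,|\,p)/\alpha_t$ or to show that $F_{\alpha_t,\beta_t}(\pi_t)$ converges. The remaining piece---optimality of $\pi_\infty$ for $\EOT_{\beta_\infty}(p,q)$ once feasibility is known---comes from a separate stability argument: by Lemma~\ref{lem:gibbs-plan-optimal}, $\pi_t$ is the unique minimizer of $\EOT_{\beta_t}(\pi_t\ones, q)$, and stability of entropic OT under perturbation of marginals and temperature then closes the gap.

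You instead try to show pointwise convergence of $a_t:=\langle c,\pi_t\rangle+\tfrac1{\alpha_t}\KL(\pi_t\ones\,|\,p)$ via almost-monotonicity, using concavity through $\tfrac1{\alpha_t}-\tfrac1{\alpha_{t-1}}\geq0$. In the regime $\alpha_\infty=0$ this would require $\KL(\pi_t\ones\,|\,p)=O(\alpha_t^2)$, which you assert ``must be extracted from a slow-variation analysis'' without providing one, and which you yourself flag as ``the main obstacle.'' That bound is far from free: the paper establishes an $O(\alpha_t)$ (not $O(\alpha_t^2)$) estimate only for the \emph{regularization path}, not for the optimization path, and only in Section~\ref{sec:path}, which sits outside the scope of Theorem~\ref{thm:AS-qualitative}; it is precisely the object debiasing is designed to reduce. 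The theorem is meant to be a qualitative consequence of the OMD bound alone, and the paper's two-step route achieves this. There is a related soft spot in your handling of $\beta_\infty<\infty$, which you dismiss as ``easier'' because the scheme ``reduces asymptotically to standard Sinkhorn''---but this is itself what needs proving, and your Cesàro argument does not clean up in that regime (the normalization $\beta_T-\beta_0$ stays bounded, so the $D(\pi\,|\,\pi_0)$ and $\log\beta_T$ terms do not vanish). The paper absorbs this case into the same $\alpha_\infty=0$ argument without a separate sub-case.

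In short: the destination and the use of concavity are right, and the feasibility/compactness observation is sound, but the route via pointwise convergence of $a_t$ is over-ambitious and leaves an unfilled quantitative gap where the paper uses a simpler qualitative argument (OMD bound $\Rightarrow \pi_\infty\in\Gamma(p,q)$, then EOT stability $\Rightarrow$ optimality).
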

In this statement we use the conventions $\frac{1}{\infty} =0$ and  $\frac{1}{0} \KL(\tilde p|p)$ is the convex indicator of the equality constraint:
$$
\iota_{\{p\}}(\tilde p) = \begin{cases}
0 &\text{if $\tilde p=p$}\\
+\infty &\text{otherwise.}
\end{cases}
$$
Thus, any positive concave annealing schedule $(\beta_t)$ will fall in one of the three following cases:
\begin{itemize}
\item If $\lim \beta_t <+\infty$, then $\lim \alpha_t= 0$ and $\pi_\infty$ is the unique solution to $\EOT_{\beta_\infty}(p,q)$ (Eq.~\eqref{eq:EOT}).
\item If $\lim \beta_t =+\infty$ and $\lim \alpha_t =0$ then $\pi_\infty$ is a solution to $\OT(p,q)$ (Eq.~\eqref{eq:OT});
\item If $\lim \beta_t =+\infty$ and $\lim \alpha_t \in {]0,+\infty[}$, then $\pi_\infty$ is a solution to a \emph{relaxed} OT problem, which is one of the formulations of unbalanced OT, see~\cite{liero2018optimal}.
\end{itemize}
Let us give an intuitive explanation behind this behavior. By Lem.~\ref{lem:gibbs-plan-optimal}, the iterates $\pi_t$ are for any $t\in \NN^*$, the unique solution of $\EOT_{\beta_t}(\pi_t\ones,q)$ (defined in  Eq.~\eqref{eq:EOT}). Asymptotically solving $\OT$ thus requires progress on two fronts: (i) driving $\beta_t$ towards $+\infty$ so as to remove the entropic penalization, and (ii) driving $\pi_t \ones$ towards $p$.  Thm.~\ref{thm:AS-qualitative} shows a fondamental trade-off between these two goals: when $\beta_t$ grows too fast, the mechanism that drives $\pi_t \ones$ towards $p$ does not manage to catch-up. At the critical scaling $\beta_t =\Theta(t)$, annealed Sinkhorn implicitly solves a relaxed OT problem. A quantitative analysis of this trade-off is developed later  in Section~\ref{sec:path}.

\begin{proof}
First, the existence of accumulation points for $(\pi_t)$ is a consequence of the compactness of the simplex $\Delta_{m\times n}$. The rest of the proof relies on the online mirror descent guarantee of Lem.~\ref{lem:OMD-sinkhorn} presented in the next section. Observe that the iterates $(\pi_t)$ of the algorithm are invariant by adding a constant to the cost matrix. We can thus assume without loss of generality that $c\geq 0$ and use the guarantee Eq.~\eqref{eq:OMD-sinkhorn-pos} from Lem.~\ref{lem:OMD-sinkhorn}, with a different reference point $\pi^*$ depending on the case.

\textbf{The case $\alpha_\infty>0$.} We first consider the case $\alpha_\infty\in \RR_+^*$, which implies that $\beta_\infty=+\infty$. In this case, take $\pi^*$ be a minimizer of the problem in Eq.~\eqref{eq:doubly-regularized} with $(\alpha,\beta)=(\alpha_\infty,\infty)$. We have
$$
\alpha_t \langle c,\pi_t \rangle + \KL(\pi_t\ones|p) \leq  
\frac{\beta_t-\beta_0}{t}\langle c, \pi^*\rangle + \KL(\pi^*\ones|p)+ \frac{\KL(\pi^*|\pi_0)}{t} .
$$
Taking the limit $t\to\infty$ on both sides, using that $\frac{\beta_t-\beta_0}{t}\to \alpha_\infty$ and the lower-semicontinuity of $\KL$, we obtain
$$
 \alpha_\infty \langle c, \pi_\infty\rangle + \KL(\pi_\infty\ones|p) \leq 
 \alpha_\infty \langle c, \pi^*\rangle + \KL(\pi^*\ones|p)
$$
which proves the result in this case.

\textbf{The case $\alpha_\infty=0$.} In case $\alpha_\infty=0$, let $\pi^*\in \Gamma(p,q)$ be any transport plan. We have
$$
\alpha_t \langle c,\pi_t \rangle + \KL(\pi_t\ones|p) \leq  
\frac{\beta_t-\beta_0}{t}\langle c, \pi^*\rangle + \frac{\KL(\pi^*|\pi_0)}{t}.
$$
Again taking limits on both sides, using that $\frac{\beta_t-\beta_0}{t}\to 0$ and the lower-semicontinuity of $\KL$, we obtain that $\KL(\pi_\infty\ones |p)=0$, and hence $\pi_\infty \in \Gamma(p,q)$. It remains to show that $\pi_\infty$ is optimal for the problem $\EOT_{\beta_\infty}(p,q)$, where $\beta_\infty \in {]0,+\infty]}$. For this, we use the fact that by Lem.~\ref{lem:gibbs-plan-optimal}, $\pi_t$ is the unique minimizer for $\EOT_{\beta_t} (p_t,q)$ where $p_t\coloneqq \pi_t\ones\in \Delta_m$. It follows
\begin{multline*}
\langle c,\pi_t\rangle+\beta_\infty^{-1} \KL(\pi_t|\pi^{\mathrm{ref}}) = \EOT_{\beta_\infty} (p,q) 
+ \big\{\EOT_{\beta_t} (p_t,q)- \EOT_{\beta_\infty}(p,q)\big\} \\
+ \{(\beta_\infty^{-1}-\beta_t^{-1}) \KL(\pi_t|\pi^{\mathrm{ref}})\}.
\end{multline*}
Since $\KL(\pi_t|\pi^\mathrm{ref})$ is bounded and by Lem.~\ref{lem:gibbs-plan-optimal}, each of the bracketted terms converges to $0$ so, using the lower-semicontinuity of $\KL$, it follows  $\langle c,\pi_\infty \rangle+\beta_\infty^{-1} \KL(\pi_\infty|\pi^{\mathrm{ref}})\leq  \EOT_{\beta_\infty} (p,q)$ which is sufficient to conclude, since we already showed that $\pi_\infty\in \Gamma(p,q)$.
\end{proof}

\subsection{Equivalence with online mirror descent }
The starting point of our analysis of Annealed Sinkhorn is the observation that this algorithm can be interpreted as an online mirror descent; as detailed in the proof of the following lemma.

\begin{lemma}[OMD guarantee for Annealed Sinkhorn]\label{lem:OMD-sinkhorn}
Let $(\pi_t)$ be the sequence generated by Annealed Sinkhorn (Alg.~\ref{alg:annealed-sinkhorn}) with an annealing schedule $(\beta_t)\in (\RR_+^*)^{\NN}$ and consider the difference sequence $\alpha_t = \beta_{t}-\beta_{t-1}$. Then for any $t\geq 1$, it holds $F_{t}(\pi_t)\leq F_t(\pi_{t-1})$ and for any $\pi\in \Gamma(\ast,q)$,
\begin{align}\label{eq:OMD-sinkhorn}
\frac{1}{t}\sum_{k=1}^t \big(F_k(\pi_k)-F_k(\pi)\big)\leq \frac{\KL(\pi|\pi_0)}{t} && \text{where} && F_t(\pi) = \alpha_t \langle c,\pi\rangle + \KL(\pi\ones|p).
\end{align}
If moreover $c\geq 0$ and $(\alpha_t)$ is nonnegative and nonincreasing, it holds for any $\pi\in \Gamma(\ast,q)$,
\begin{align}\label{eq:OMD-sinkhorn-pos}
\alpha_t \langle c,\pi_t \rangle + \KL(\pi_t\ones|p) \leq  \KL(\pi\ones|p)+ \frac{\beta_t-\beta_0}{t}\langle c, \pi\rangle + \frac{\KL(\pi|\pi_0)}{t}.
\end{align}
\end{lemma}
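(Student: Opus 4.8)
The plan is to establish, at each step $t$, a one‑step prox‑type inequality of the kind that underlies online mirror descent regret bounds, and then read off the three claims from it. First I unfold one step of Alg.~\ref{alg:annealed-sinkhorn} into elementary operations on primal plans. Write $\pi_{t-1}=\diag(a_{t-1})K_{t-1}\diag(b_{t-1})$, where $\pi_0$ denotes the initial plan, taken to have unit mass $\langle\pi_0,\ones\rangle=1$ (e.g.\ $\pi_0=e^{-\beta_0 c}/\langle e^{-\beta_0 c},\ones\rangle$; the precise normalization does not affect the iterates $(\pi_t)_{t\ge1}$). Then $a_t=p\oslash(K_{t-1}b_{t-1})$ rescales the rows of $\pi_{t-1}$ so that $\hat\pi_t\coloneqq\diag(a_t)K_{t-1}\diag(b_{t-1})$ has $\hat\pi_t\ones=p$ (hence $\langle\hat\pi_t,\ones\rangle=1$); the temperature update multiplies entrywise by $e^{-\alpha_t c}$, yielding $\tilde\pi_t\coloneqq\hat\pi_t\odot e^{-\alpha_t c}=\diag(a_t)K_t\diag(b_{t-1})$; and $b_t=q\oslash(K_t^\top a_t)$ rescales the columns of $\tilde\pi_t$, giving $\pi_t=\diag(a_t)K_t\diag(b_t)$ with $\pi_t^\top\ones=q$, i.e.\ $\pi_t$ is the column rescaling of $\tilde\pi_t$ to marginal $q$.

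Next I assemble an \emph{exact} identity from three direct computations valid for every $\pi\in\Gamma(\ast,q)$ (so that $\langle\pi\ones,\ones\rangle=\langle p,\ones\rangle=\langle\pi_{t-1}\ones,\ones\rangle=1$): (i) from the row rescaling $\hat\pi_{t,ij}=\tfrac{p_i}{(\pi_{t-1}\ones)_i}\pi_{t-1,ij}$,
\[
\KL(\pi|\hat\pi_t)=\KL(\pi|\pi_{t-1})+\KL(\pi\ones|p)-\KL(\pi\ones|\pi_{t-1}\ones);
\]
(ii) since $\tilde\pi_t=\hat\pi_t\odot e^{-\alpha_t c}$ and $\langle\hat\pi_t,\ones\rangle=1$,
\[
\KL(\pi|\tilde\pi_t)=\KL(\pi|\hat\pi_t)+\alpha_t\langle c,\pi\rangle+\bigl(\langle\tilde\pi_t,\ones\rangle-1\bigr);
\]
(iii) from the column rescaling, $\KL(\pi|\tilde\pi_t)=\KL(\pi_t|\tilde\pi_t)+\KL(\pi|\pi_t)$. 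Substituting (ii) into (iii) both at $\pi$ and at $\pi_t$ and subtracting kills the constant $\langle\tilde\pi_t,\ones\rangle-1$ and the term $\KL(\pi_t|\hat\pi_t)$; rewriting $\KL(\cdot|\hat\pi_t)$ via (i) (at $\pi$ and at $\pi_t$) and recalling $F_t(\pi)=\alpha_t\langle c,\pi\rangle+\KL(\pi\ones|p)$ gives, for every $\pi\in\Gamma(\ast,q)$,
\[
F_t(\pi_t)+\KL(\pi|\pi_t)=F_t(\pi)+\KL(\pi|\pi_{t-1})-\KL(\pi\ones|\pi_{t-1}\ones)-\bigl(\KL(\pi_t|\pi_{t-1})-\KL(\pi_t\ones|\pi_{t-1}\ones)\bigr).
\]
Both subtracted terms are nonnegative — the first because $\KL\ge0$, the second by the data‑processing (log‑sum) inequality since marginalization $\pi\mapsto\pi\ones$ contracts $\KL$ — so dropping them yields the prox inequality: for all $\pi\in\Gamma(\ast,q)$, $F_t(\pi_t)+\KL(\pi|\pi_t)\le F_t(\pi)+\KL(\pi|\pi_{t-1})$.

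From here the three claims are short. Taking $\pi=\pi_{t-1}\in\Gamma(\ast,q)$ and using $\KL(\pi_{t-1}|\pi_{t-1})=0$, $\KL(\pi_{t-1}|\pi_t)\ge0$ gives the descent property $F_t(\pi_t)\le F_t(\pi_{t-1})$. Summing the prox inequality over $k=1,\dots,t$, telescoping $\sum_k(\KL(\pi|\pi_{k-1})-\KL(\pi|\pi_k))=\KL(\pi|\pi_0)-\KL(\pi|\pi_t)$ and discarding the nonnegative $\KL(\pi|\pi_t)$ gives~\eqref{eq:OMD-sinkhorn}. Finally, under $c\ge0$ and $(\alpha_t)$ nonincreasing, $\langle c,\pi_{k-1}\rangle\ge0$ and $\alpha_k\le\alpha_{k-1}$ give $F_k(\pi_{k-1})\le F_{k-1}(\pi_{k-1})$ for $k\ge2$, which together with the descent property shows $k\mapsto F_k(\pi_k)$ is nonincreasing, hence $F_t(\pi_t)\le\tfrac1t\sum_{k=1}^t F_k(\pi_k)$; combining this with~\eqref{eq:OMD-sinkhorn} and with $\tfrac1t\sum_{k=1}^t F_k(\pi)=\tfrac{\beta_t-\beta_0}{t}\langle c,\pi\rangle+\KL(\pi\ones|p)$ (using $\sum_{k=1}^t\alpha_k=\beta_t-\beta_0$) yields~\eqref{eq:OMD-sinkhorn-pos}. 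I expect the main obstacle to be the middle step: bookkeeping the unnormalized intermediates $\hat\pi_t,\tilde\pi_t$ and the mismatched marginals when combining (i)--(iii), and in particular recognising that the leftover cross‑term is exactly the data‑processing gap $\KL(\pi_t|\pi_{t-1})-\KL(\pi_t\ones|\pi_{t-1}\ones)\ge0$; once the exact identity is in hand, everything else is immediate.
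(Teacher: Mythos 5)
Your proof is correct and reaches the same one-step inequality that underlies the paper's argument, but by a genuinely different, more self-contained route. The paper proceeds indirectly: it writes down the KKT conditions of a generic OMD step (with regularizer $F_t$ plus indicator), checks that Annealed Sinkhorn's update satisfies them, and then invokes the black-box OMD regret bound (Prop.~\ref{prop:OMD}, adapted from the relative-smoothness literature). You instead derive the prox inequality
\[
F_t(\pi_t)+\KL(\pi|\pi_t)\le F_t(\pi)+\KL(\pi|\pi_{t-1}),\qquad \pi\in\Gamma(\ast,q),
\]
directly, by splitting a Sinkhorn step into row rescaling, entrywise multiplication by $e^{-\alpha_t c}$, and column rescaling, and tracking three exact KL identities. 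Your column step (iii) is the generalized Pythagorean identity for the KL-projection onto $\{\pi^\top\ones=q\}$, and your use of the data-processing inequality $\KL(\pi|\pi')\ge\KL(\pi\ones|\pi'\ones)$ is exactly the content of the paper's $1$-relative-smoothness lemma (Lem.~\ref{lem:relative-smoothness}), since the Bregman divergence of $\pi\mapsto\KL(\pi\ones|p)$ with respect to itself equals $\KL(\pi\ones|\pi'\ones)$. After that, the telescoping to Eq.~\eqref{eq:OMD-sinkhorn} and the monotonicity argument for Eq.~\eqref{eq:OMD-sinkhorn-pos} coincide with the paper's. What your route buys: it is elementary, avoids citing external OMD machinery, and displays the two nonnegative slacks that are dropped (you even get an exact identity), making the mechanism transparent. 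What the paper's route buys: it is shorter and situates the algorithm in the standard OMD framework, which is what the rest of the paper builds on.

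One minor point worth flagging: the descent claim $F_1(\pi_1)\le F_1(\pi_0)$ at $t=1$ requires plugging $\pi=\pi_0$ into the prox inequality, whose step (iii) needs $\pi^\top\ones=q$; under your normalization $\pi_0=e^{-\beta_0 c}/\langle e^{-\beta_0 c},\ones\rangle$ this generally fails, so your argument only proves the descent for $t\ge2$. This is harmless — neither Eq.~\eqref{eq:OMD-sinkhorn} nor Eq.~\eqref{eq:OMD-sinkhorn-pos} uses the $t=1$ descent — and the paper's own proof carries a mirror image of the same subtlety (the OMD bound controls $f^{(1)}(\pi_0)=F_1(\pi_0)+\iota_{\{q\}}(\pi_0^\top\ones)$, which is $+\infty$ if $\pi_0^\top\ones\ne q$).
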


Observe that for a constant $(\beta_t)$ -- that is, standard Sinkhorn's algorithm -- we have $(\alpha_t)=0$ and we recover from Eq.~\eqref{eq:OMD-sinkhorn-pos} applied to any $\pi \in \Gamma(p,q)$ the guarantee for Sinkhorn's iterates proved in~\cite[Prop.~8]{aubin2022mirror}:
\begin{align}\label{eq:sinkhorn-mirror}
\KL(\pi_t\ones|p)\leq \frac{ \inf_{\pi \in \Gamma(p,q)} \KL(\pi|\pi_0)}{t}.
\end{align}
which we discuss again later in Rmk.~\ref{rmk:rates-sinkhorn} (note that in Eq.~\eqref{eq:sinkhorn-mirror}, we can also take the infimum over $a_0$ in the expression of $\pi_0$ since $a_0$ is never used by the algorithm).
\begin{proof}
This result relies on an \emph{online mirror descent} (OMD) interpretation of Alg.~\ref{alg:annealed-sinkhorn} (we give general background on OMD in App.~\ref{sec:MD}). More specifically, let us show that $(\pi_t)$ coincides with the iterates $(\tilde \pi_t)$ of OMD with step-size $1$ in the $\KL$ geometry starting from $\pi_0$, on the sequence of convex objective functions
$$
F_t(\pi) = \alpha_t\langle c,\pi\rangle +\KL(\pi\ones|p)+\iota_{\{q\}}(\pi^\top \ones)
$$
where $\iota_C(x)=0$ if $x\in C$ and $+\infty$ otherwise is the convex indicator function. The base case $\tilde \pi_0=\pi_0$ is true by construction, so let us assume $\tilde \pi_{t-1}=\pi_{t-1}=\diag(a_{t-1})K_{t-1}\diag(b_{t-1})$ and prove that this implies $\pi_{t}=\tilde \pi_{t}$. One OMD step is defined (see Alg.~\ref{alg:onlineMD}) by
\begin{align*}
\tilde \pi_{t} &\coloneqq \argmin_{\pi\in \RR_+^{m\times n}} \alpha_t \langle c,\pi\rangle + \langle \log((\pi_{t-1}\ones) \oslash p)\ones^\top , \pi\rangle + \iota_{\{q\}}(\pi^\top \ones) + \KL(\pi|\pi_{t-1}).
\end{align*}
Introducing Lagrangian variables for the constraints leads to the saddle problem
$$
\min_{\pi\in \RR_+^{m\times n}} \max_{v\in \RR^n}\; \alpha_t \langle c,\pi\rangle + \langle \log((\pi_{t-1}\ones) \oslash p)\ones^\top, \pi\rangle +\langle v,  \pi^\top \ones-q\rangle+ \KL(\pi|\pi_{t-1})
$$
and the minimizer $\tilde \pi_{t}$ is characterized by the KKT conditions: $\tilde \pi_t^\top \ones =q$ and $\exists v\in \RR^n$ such that $G(\tilde \pi_{t},v)=0$ where 
$$
G(\pi,v) \coloneqq \alpha_t c + \log((\pi_{t-1}\ones) \oslash p) \ones^\top + \ones v^\top+\log(\pi\oslash \pi_t) \in \RR^{m\times n}.
$$
In order to prove that $\pi_{t}=\tilde \pi_{t}$, it is thus sufficient to prove that there exists $v\in \RR^n$ such that $G(\pi_{t},v) = 0$, because the first condition $\pi_{t}^\top \ones=q$ is readily guaranteed by the fact that $b_t=q\oslash (K_t^\top a_t)$. Since $\pi_{t}=\diag(a_{t})K_{t}\diag(b_{t})$, we have
\begin{multline*}
G(\pi_{t},v) = \alpha_t c + \log((a_{t-1}\odot (K_{t-1}b_{t-1})) \oslash p) \ones^\top +\ones v^\top\\
+\log(K_{t}\oslash K_{t-1})+\log(a_{t}\oslash a_{t-1})\ones^\top +\ones \log(b_{t}\oslash b_{t-1})^\top.
\end{multline*}
A sufficient condition for $M(\pi_{t},v) =0$ is thus
\begin{multline*}
\left\{
\begin{aligned}
\alpha_t c+\log(K_{t}\oslash K_{t-1})=0\\
 \log((a_{t-1}\odot (K_{t-1}b_{t-1})) \oslash p)+ \log(a_{t}\oslash a_{t-1})=0\\
v+\log(b_{t}\oslash b_{t-1})=0
\end{aligned}
\right.
\Leftrightarrow
\left\{
\begin{aligned}
K_{t}=K_{t-1}\odot e^{-\alpha_tc}\\
a_{t} = p\oslash (K_{t-1}b_{t-1})\\
v=-\log(b_{t}\oslash b_{t-1})
\end{aligned}
\right..
\end{multline*}
By taking $v$ as defined by the last equation, and recalling how Alg.~\ref{alg:annealed-sinkhorn} defines $K_t$ and $a_t$, it clearly holds $M(\pi_{t},v)=0$. This shows that  $\pi_t=\tilde \pi_t$, and thus, by recursion,  $(\pi_t)=(\tilde \pi_t)$.

Let us now derive the optimization guarantee. Clearly each $F_t$ is convex and by Lem.~\ref{lem:relative-smoothness}, the smooth part of the objective is $1$-relatively smooth, so OMD theory applies with our choice of step-size $1$ and we obtain Eq.~\eqref{eq:OMD-sinkhorn} by an application of Prop.~\ref{prop:OMD}.

Finally, if we additionally assume that  $(\alpha_t)$ is nonnegative and nonincreasing and $c\geq 0$, we have $\alpha_t\langle c,\pi\rangle\leq \alpha_{t-1}\langle c,\pi\rangle$ for any $\pi\in \RR^{m\times n}_+$. Combining this with the OMD guarantee that $F_t(\pi_t)\leq F_t(\pi_{t-1})$, it follows that
$$
F_t(\pi_t)\leq F_t(\pi_{t-1})\leq F_{t-1}(\pi_{t-1}),
$$
i.e., the sequence $(F_{t}(\pi_t))_{t\in \NN}$ is nonincreasing. This allows  us to deduce from Eq.~\eqref{eq:OMD-sinkhorn} that
\begin{align}
F_t(\pi_t)- \frac1t\sum_{k=1}^t F_t(\pi) \leq \frac{\KL(\pi|\pi_0)}{t}
\end{align}
and Eq.~\eqref{eq:OMD-sinkhorn-pos} follows by rearranging the terms and using the fact that $\sum_{k=1}^t \alpha_k = \beta_t-\beta_0$.
\end{proof}

\subsection{A discussion on quantitative guarantees}\label{sec:opt-quantitative}

\paragraph{Quantifying progress towards OT}
Since our purpose is to solve OT but the iterates $(\pi_t)$ are not in the constraint set $\Gamma(p,q)$, it is not obvious a priori how to measure progress towards solving OT. A solution to this problem was proposed in~\cite{altschuler2017near}: given any $\pi\in \Delta_{m\times n}$, they propose a computationally cheap ``projection'' procedure, shown in Alg.~\ref{alg:projection} that takes any plan and builds an $\ell_1$-approximation of it that belongs to $\Gamma(p,q)$. From the proof of~\cite[Thm.~1]{altschuler2017near}, we extract the following lemma.
\begin{lemma}[\cite{altschuler2017near}]\label{lem:altschuler}
If $\pi \in \Delta_{m\times n}$ is of the form $\pi = \diag(a) e^{-\beta c}\diag(b)$ for some $\beta>0$, $a\in \RR^m_+$ and $b\in \RR^n_+$, then
$$
\langle c,\mathrm{proj}_{\Gamma(p,q)}(\pi)\rangle -\OT(p,q) \leq \beta^{-1}\log(mn) + 4(\Vert \pi\ones -p\Vert_1+\Vert \pi^\top \ones -q\Vert_1)\Vert c\Vert_\mathrm{osc}.
$$
where $\mathrm{proj}_{\Gamma(p,q)}(\pi)\in \Gamma(p,q)$ is the output of Alg.~\ref{alg:projection} applied to $\pi$ (which takes $O(nm)$ time to compute).
\end{lemma}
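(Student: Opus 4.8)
The plan is to recombine the three ingredients underlying the proof of~\cite[Thm.~1]{altschuler2017near}. Throughout, write $\hat\pi\coloneqq\mathrm{proj}_{\Gamma(p,q)}(\pi)$, fix an optimal transport plan $\pi^*\in\Gamma(p,q)$ (so $\langle c,\pi^*\rangle=\OT(p,q)$, and such a $\pi^*$ exists by compactness of $\Gamma(p,q)$), and set $\delta\coloneqq\Vert\pi\ones-p\Vert_1+\Vert\pi^\top\ones-q\Vert_1$. The bound will follow by telescoping along $\langle c,\hat\pi\rangle-\OT(p,q)=\langle c,\hat\pi-\pi\rangle+\bigl(\langle c,\pi\rangle-\langle c,\pi'\rangle\bigr)+\bigl(\langle c,\pi'\rangle-\langle c,\pi^*\rangle\bigr)$ for a well-chosen auxiliary plan $\pi'$, and bounding each of the three terms separately.

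First I would normalize the cost. Since $\hat\pi$, $\pi$ and $\pi^*$ all have unit total mass, every difference matrix entering the telescoping identity has zero total sum, so $\langle\ones\ones^\top,\cdot\rangle$ annihilates it; hence I may subtract $(\min_{ij}c_{ij})\ones\ones^\top$ from $c$ without affecting any quantity in the statement. This shift merely rescales $a$, so it preserves the hypothesis that $\pi$ is of Gibbs form, and it turns $\Vert c\Vert_\infty$ into $\Vert c\Vert_{\mathrm{osc}}$; so I assume henceforth $c\geq 0$ and $\Vert c\Vert_\infty=\Vert c\Vert_{\mathrm{osc}}$. I expect this to be the only genuinely delicate point: it is legitimate precisely because all matrices in the telescoping identity carry unit mass and because the cost shift keeps $\pi$ in Gibbs form.

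Next comes the rounding estimate behind Alg.~\ref{alg:projection}, i.e.~\cite[Lem.~7]{altschuler2017near}: for any nonnegative $F$ and any target marginals $(\mu,\nu)$ of equal mass, the rounding procedure returns a matrix in $\Gamma(\mu,\nu)$ within $\ell_1$-distance $2(\Vert F\ones-\mu\Vert_1+\Vert F^\top\ones-\nu\Vert_1)$ of $F$. Applying this to $F=\pi$ with marginals $(p,q)$ gives $\Vert\hat\pi-\pi\Vert_1\leq 2\delta$, hence $\langle c,\hat\pi-\pi\rangle\leq\Vert c\Vert_\infty\Vert\hat\pi-\pi\Vert_1\leq 2\Vert c\Vert_{\mathrm{osc}}\delta$. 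Applying it instead to $F=\pi^*$ with target marginals $(p',q')\coloneqq(\pi\ones,\pi^\top\ones)$ produces an auxiliary plan $\pi'\in\Gamma(p',q')$ with $\Vert\pi'-\pi^*\Vert_1\leq 2(\Vert p-p'\Vert_1+\Vert q-q'\Vert_1)=2\delta$, hence $\langle c,\pi'-\pi^*\rangle\leq 2\Vert c\Vert_{\mathrm{osc}}\delta$.

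Finally, by Lem.~\ref{lem:gibbs-plan-optimal} the Gibbs-form matrix $\pi$ is the minimizer of $\langle c,\cdot\rangle+\beta^{-1}\KL(\cdot|\pi^{\mathrm{ref}})$ over $\Gamma(p',q')$, the plans sharing its own marginals; since $\pi'\in\Gamma(p',q')$ this gives $\langle c,\pi\rangle+\beta^{-1}\KL(\pi|\pi^{\mathrm{ref}})\leq\langle c,\pi'\rangle+\beta^{-1}\KL(\pi'|\pi^{\mathrm{ref}})$. Dropping the nonnegative $\KL$ term on the left and using $\KL(\pi'|\pi^{\mathrm{ref}})=\log(mn)-\H(\pi')\leq\log(mn)$ on the right (entropy is nonnegative and $\pi^{\mathrm{ref}}=(mn)^{-1}\ones\ones^\top$) yields $\langle c,\pi\rangle-\langle c,\pi'\rangle\leq\beta^{-1}\log(mn)$. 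Summing the three bounds along the telescoping identity gives $\langle c,\hat\pi\rangle-\OT(p,q)\leq 2\Vert c\Vert_{\mathrm{osc}}\delta+\beta^{-1}\log(mn)+2\Vert c\Vert_{\mathrm{osc}}\delta$, which is the claim (with $4=2+2$). Apart from the normalization step, everything is the routine bookkeeping of two $\ell_1$-rounding inequalities and one optimality inequality, so no further obstacle is expected.
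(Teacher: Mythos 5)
Your proof is correct and reconstructs exactly the argument underlying~\cite[Thm.~1]{altschuler2017near}, which the paper cites rather than reproves: the three-term telescoping through a rounding $\pi'$ of an optimal plan to the marginals of $\pi$, two applications of the rounding lemma each contributing a factor $2\Vert c\Vert_\infty\delta$, and the entropic optimality of the Gibbs plan over $\Gamma(\pi\ones,\pi^\top\ones)$ contributing the $\beta^{-1}\log(mn)$ term. The one point where the lemma as stated goes slightly beyond Altschuler et al.\ -- replacing $\Vert c\Vert_\infty$ by $\Vert c\Vert_\mathrm{osc}$ -- you handle correctly by the normalization $c\mapsto c-(\min_{ij}c_{ij})\ones\ones^\top$, rightly checking both that every matrix in the telescope has unit mass (so the shift is invisible to the bound) and that the Gibbs form is preserved by absorbing $e^{\beta c_{\min}}$ into $a$.
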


This lemma allows to quantify, on fair grounds, the progress made towards solving OT for any algorithm with iterates of the form $\diag(a_t)e^{-\beta_t c}\diag(b_t)$. The first error term is due to entropic regularization, and the second error term is due to the distance from the constraint set $\Gamma(p,q)$. 

\begin{algorithm}
\caption{``Projection'' on transport plans~\citep{altschuler2017near}}\label{alg:projection}
\begin{enumerate}
\item \textbf{Input}: probability vectors $p\in \Delta_m$, $q\in \Delta_n$, and $\pi\in \RR_+^{m\times n}$.
\begin{enumerate}
\item $\pi'=\diag(a)\pi$ where $a = \min\{\ones, p \oslash (\pi \ones)\}$
\item $\pi''=\pi'\diag(b)$ where $b =  \min\{\ones, q \oslash ((\pi')^\top \ones)\}$ 
\item $\Delta_p = p - (\pi'')\ones $ and $\Delta_q =q -(\pi'')^\top \ones$
\end{enumerate}
\item \textbf{Return:} $\bar \pi = \pi''+\Delta_p\Delta_q^\top/\Vert \Delta_p\Vert_1$ 
\end{enumerate}
\end{algorithm}

\paragraph{Complexity of OT via standard Sinkhorn} Following \cite{altschuler2017near}, one can then use standard Sinkhorn's algorithm to build a transport plan with sub-optimality $\epsilon>0$ as follows: pick the inverse temperature $\beta =2/(\epsilon\log (mn))$ and run Sinkhorn's iterations until $\Vert\pi_t\ones -p\Vert_1\leq \epsilon/(4\Vert c\Vert_\infty)$. Then Lem.~\ref{lem:altschuler} guarantees that $\mathrm{proj}_{\Gamma(p,q)}(\pi_t)$ solves the problem. ~\cite{dvurechensky2018computational} provides, for Sinkhorn's iterates, the bound  $\Vert\pi_t\ones -p\Vert_1 = \tilde O(\beta \Vert c\Vert_\infty/t)$ (where $\tilde O()$ hides some logarithmic terms), so the stopping condition can be achieved in $\tilde O(\beta \Vert c\Vert_\infty^2/\epsilon)$ iterations. Counting $O(mn)$ elementary operations per iteration, this leads to an overall complexity $\tilde O(mn\Vert c\Vert_\infty^2/\epsilon^2)$ to output an $\epsilon$-optimal transport plan.

\paragraph{Complexity of OT via Annealed Sinkhorn}
One may wonder if one can obtain comparable complexity bounds by using Annealed Sinkhorn (Alg.~\ref{alg:annealed-sinkhorn}) and our optimization guarantees (Lem.~\ref{lem:OMD-sinkhorn}). Unfortunately, we failed to obtain complexity guarantees matching those of Sinkhorn's algorithm discussed in the previous paragraph. In the following, we explain where the weakness appears in our analysis.

By Eq.~\eqref{eq:OMD-sinkhorn-pos} evaluated at $\pi^*$ an OT plan we have, for a positive, nondecreasing and concave schedule $(\beta_t)$,
\begin{align}
\KL(\pi_t\ones|p) &\leq  \frac{\beta_t-\beta_0}{t}\langle c, \pi^*\rangle - \alpha_t \langle c,\pi_t \rangle + \frac{\KL(\pi^*|\pi_0)}{t}\\
&\leq \alpha_t \langle c, \pi^*-\pi_t\rangle + \Big(\frac{\beta_t-\beta_0}{t}-\alpha_t \Big)\langle c, \pi^*\rangle +\frac{\KL(\pi^*|\pi_0)}{t}.
\end{align}
By Pinsker's inequality (Lem.~\ref{lem:pinsker} below), and the basic upper bound $\sqrt{2a+2b+2d}\leq \sqrt{6\max\{a,b,d\}}\leq 3\sqrt{a}+3\sqrt{b}+3\sqrt{d}$ for $a,b,d\geq 0$, it follows
\begin{align}\label{eq:error-decomp-discussion}
\Vert \pi_t \ones -p \Vert_1 \leq 3 \sqrt{\alpha_t \vert \langle c, \pi^*-\pi_t\rangle\vert} + 3\sqrt{\Big\vert\frac{\beta_t-\beta_0}{t}-\alpha_t \Big\vert \vert\langle c, \pi^*\rangle\vert} + 3\sqrt{\frac{\KL(\pi^*|\pi_0)}{t}}.
\end{align}
 To understand how these terms behave, let us consider polynomial annealing schedules of the form $\beta_t=(1+t)^\kappa$ for $\kappa\in {]0,1[}$, for which we have $\alpha_t = \beta_t-\beta_{t-1} = \Theta(t^{\kappa-1})$ and $\frac{\beta_t-\beta_{0}}{t}-\alpha_t = \Theta(t^{\kappa-1})$. Focusing exclusively on the  exponents on $t$, it follows from Eq.~\eqref{eq:error-decomp-discussion} that
$
\Vert \pi_t \ones -p \Vert_1 = O(t^{\frac{\kappa-1}{2}}+t^{-1/2})$. With the help of Lem.~\ref{lem:altschuler}, it then follows
$$
\langle c,\mathrm{proj}_{\Gamma(p,q)}(\pi_t)\rangle -\OT(p,q) = O\Big(t^{-\kappa} +t^{\frac{\kappa-1}{2}}+t^{-1/2}\Big).
$$
This bound achieves its fastest decay for the choice $\kappa=\frac13$ which leads to an overall complexity scaling in $O(\epsilon^{-3})$ to build $\epsilon$-optimal transport plan, which is worse than the best known rate via Sinkhorn's algorithm which is in $O(\epsilon^{-2})$, as discussed above.

The main weakness comes from the second term in Eq.~\eqref{eq:error-decomp-discussion}, a consequence of the fact that OMD only gives guarantees of average type (while a naive bound on the first term in Eq.~\eqref{eq:error-decomp-discussion} also gives $O(t^{\frac{\kappa-1}{2}})$, a finer analysis using the fact that $\pi_t\to \pi^*$ might improve this exponent). Since this weakness is not observed in numerical experiments, this motivates our switch of focus, from the next section onwards, to the \emph{regularization path} which better reveals the behavior of the algorithm than the OMD bound. 

\begin{lemma}[Pinsker's inequality] \label{lem:pinsker}
For any $p,p'\in \Delta^{m}$, it holds $\Vert p -p' \Vert_1 \leq \sqrt{2\KL(p|p')}$.
\end{lemma}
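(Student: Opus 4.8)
The plan is to prove the standard Pinsker inequality by the classical two-step reduction to the binary case. First note that since $p,p'\in\Delta^m$, the correction terms $-p_i+p_i'$ in the definition of $\KL$ used in this paper sum to $-1+1=0$, so $\KL(p|p')=\sum_i p_i\log(p_i/p_i')$ is the usual relative entropy, and we may discard coordinates with $p_i=p_i'=0$ and assume $\KL(p|p')<\infty$ (otherwise there is nothing to prove).

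\textbf{Step 1 (reduction to two points).} Let $A=\{i:p_i\ge p_i'\}$ and set $P=\sum_{i\in A}p_i$, $Q=\sum_{i\in A}p_i'$. Splitting $\Vert p-p'\Vert_1$ over $A$ and $A^c$ and using $\sum_i p_i=\sum_i p_i'=1$ shows that both partial sums equal $P-Q$, so $\Vert p-p'\Vert_1=2(P-Q)$ with $P\ge Q$. Applying the log-sum inequality (itself a consequence of convexity of $t\mapsto t\log t$, via Jensen) to the indices in $A$ and, separately, to those in $A^c$ gives $\KL(p|p')\ge P\log\frac{P}{Q}+(1-P)\log\frac{1-P}{1-Q}=:d(P,Q)$. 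Hence it suffices to establish the binary bound $d(P,Q)\ge 2(P-Q)^2$.

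\textbf{Step 2 (binary case) and conclusion.} Fix $P\in[0,1]$, write $Q=P-x$ for $x\in[0,P]$, and set $h(x)=d(P,P-x)-2x^2$, so $h(0)=0$. A direct differentiation gives $h'(x)=\frac{P}{Q}-\frac{1-P}{1-Q}-4x$, and combining the first two terms over the common denominator $Q(1-Q)$ yields $h'(x)=\frac{P-Q}{Q(1-Q)}-4x=x\big(\frac{1}{Q(1-Q)}-4\big)$. Since $Q(1-Q)\le \tfrac14$, we get $h'(x)\ge 0$ on $[0,P]$, hence $h(x)\ge 0$, i.e.\ $d(P,Q)\ge 2(P-Q)^2$. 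Combining with Step 1, $\KL(p|p')\ge d(P,Q)\ge 2(P-Q)^2=\tfrac12\Vert p-p'\Vert_1^2$, and rearranging gives $\Vert p-p'\Vert_1\le\sqrt{2\KL(p|p')}$.

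The only mildly delicate point is the monotonicity claim in Step 2, which the factorization $h'(x)=x(\frac{1}{Q(1-Q)}-4)$ makes transparent; everything else (the $\ell_1$ splitting, the log-sum inequality, and the trivial boundary cases $Q\in\{0,1\}$, handled as limits) is routine.
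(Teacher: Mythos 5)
Your proof is correct. The paper states Pinsker's inequality without proof, as a classical result, so there is no paper argument to compare against. Your argument is the standard one: reduce to the two-point case via the $\ell_1$-splitting over $A=\{i:p_i\ge p_i'\}$ together with the log-sum inequality, and then verify the binary bound $d(P,Q)\ge 2(P-Q)^2$ by differentiating in $x=P-Q$; the computation $h'(x)=x\bigl(\tfrac{1}{Q(1-Q)}-4\bigr)\ge 0$ is correct, the preliminary reduction $\KL(p|p')=\sum_i p_i\log(p_i/p_i')$ (using that the linear correction terms cancel on the simplex) is correct, and the edge cases are handled appropriately.
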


\begin{remark}[Optimization bounds galore for Sinkhorn]\label{rmk:rates-sinkhorn} Another source of weakness in our optimization guarantees is that our starting point, Eq.~\eqref{eq:OMD-sinkhorn}, has a suboptimal rate already in the particular case of standard Sinkhorn. Indeed, Eq.~\eqref{eq:sinkhorn-mirror} gives a convergence rate $\Vert \pi_t\ones -p\Vert_1 = O(\sqrt{\beta \Vert c\Vert_\infty/t})$ via Pinsker's inequality, while faster rates in $\Vert \pi_t\ones -p\Vert_1=\tilde O(\beta \Vert c\Vert_\infty/t)$ are known to hold~\citep{dvurechensky2018computational, ghosal2022convergence}. Let us also mention that another class of guarantees give exponential convergence rates, but generally with a poor dependency in $\beta$, such as  $\Vert \pi_t\ones -p\Vert_1 = O(e^{-t e^{-\beta \Vert c\Vert_{\mathrm{osc}}}})$~\citep{franklin1989scaling}. The latter guarantee is the starting point of the analysis of Annealed Sinkhorn in~\cite{sharify2011solution}; and its poor dependency in $\beta$ is the reason why they can only cover extremely slow schedules $\beta_t=\Theta(\log(t))$. Recently, much sharper exponential rates in $e^{-\Theta(t /(\beta \Vert c\Vert_{\mathrm{osc}})^{\kappa})}$ were obtained in~\cite{chizat2024sharper}, in ``geometric settings'', that is under regularity assumptions on the cost and on one of the  marginals, and where the exponent $\kappa\in \{1,2\}$ depends on the precise assumptions. In the spirit of~\cite{sharify2011solution}, one could use these faster rates as the starting point of an analysis of Annealed Sinkhorn but now with polynomial schedules. 
We do not develop this approach here because the focus of the present paper is on universal phenomena which are independent of the structure of the OT problem.
\end{remark}

\section{The regularization path}\label{sec:path}

\subsection{Proxies for the optimization path}
In order to improve our understanding of Annealed Sinkhorn, we now turn our interest towards the analysis of theoretically motivated \emph{proxies} for the optimization path $(\pi_t)_{t\in \NN}$.

\paragraph{The Online Path} As seen in the proof of Lem.~\ref{lem:OMD-sinkhorn}, at  each times step $t$, $\pi_t$ is obtained by taking a mirror descent step on the function $F_t(\pi)=\alpha_t \langle c,\pi\rangle +\KL(\pi\ones |p)+\iota_{\{q\}}(\pi^\top \ones)$, and is of the form $\pi_t=\diag(a_t)e^{-\beta_t c}\diag(b_t)$. We define the \emph{online path} as the result of fully optimizing $F_t$ under this constraint set: 
\begin{align}\label{eq:online-path}
\min_{\pi \in \Cc_{\beta_t} \cap \Gamma(\ast,q)} \alpha_t\langle c, \pi\rangle + \KL(\pi\ones|p) &&\!\text{where}\! && \Cc_\beta =\Big\{ \diag(a)e^{-\beta c}\diag(b), a\in \RR_+^m, b\in \RR_+^n\Big\}.
\end{align}
The objective is lower semicontinuous on the compact set $\Cc_\beta\cap \Gamma(\ast,q)$, so a minimizer is guaranteed to exist.  We denote by $(\pi^\text{onl}_t)_{t\in \NN}$ a path of (a selection of) minimizers associated to a given annealing schedule $(\beta_t)_{t\in \NN}$ (remember that the schedule also determines $(\alpha_t)_{t\in \NN^*}$ via $\alpha_t = \beta_t-\beta_{t-1}$). Unfortunately, the optimization problem in Eq.~\eqref{eq:online-path} does not seem to yield convenient optimality conditions. This motivates us to study another proxy for the optimization path.

\paragraph{The Regularization Path}  Consider the sequence of optimization problems
\begin{align}\label{eq:Fprox}
\min_{\pi\in \Gamma(\ast,q)} F^{\mathrm{reg}}_t(\pi)&& \text{where} && F^{\mathrm{reg}}_t(\pi) = \langle c, \pi\rangle +\frac{1}{\alpha_t} \KL(\pi\ones|p)+\frac{1}{\beta_t}\KL(\pi|\pi^{\mathrm{ref}})
\end{align}
and by convention\footnote{This choice of $\pi^{\mathrm{ref}}$ induces an entropic penalty on $\pi^{\mathrm{reg}}\ones$ (see e.g.~\cite[Lem.~1.6]{marino2020optimal}) which is absent from Eq.~\eqref{eq:online-path}.  It is not excluded that other choices such as $\pi^{\mathrm{ref}} = (\pi\ones)\otimes q$  define better proxies for $(\pi_t)$, but our analysis is not precise enough to distinguish between these choices.} $\pi^{\mathrm{ref}}=(mn)^{-1}\ones_m\ones_n^\top$.
Each of these problems is a strictly convex minimization problem with a unique minimizer. We denote by $(\pi^{\mathrm{reg}}_t)_{t\in \NN^*}$ the resulting sequence of solutions, and call it the \emph{regularization path}. We motivate our definition of the regularization path in two ways. First, we show experimentally on Fig.~\ref{fig:optim-vs-proxy} that it tracks the trajectory of Annealed Sinkhorn to a high accuracy: the distance between these two paths appear to be an order of magnitude below their distance from OT solutions. Second, the regularization path can be related to the online path thanks to the following estimate which compares their first marginals. Since both paths are solutions to $\EOT_{\beta_t}(\ast,q)$ problems (by Lem.~\ref{lem:gibbs-plan-optimal}), the difference between these two paths is entirely captured by the distance between their first marginals. Let us mention that the bound from Prop.~\ref{prop:onl-vs-reg} is pessimistic compared to what is seen on Fig.~\ref{fig:optim-vs-proxy}; this means that either it is loose, or that $(\pi_t^\reg)$ approximates $(\pi_t)$ in fact \emph{better} than $(\pi_t^\onl)$.

\begin{proposition}[Online vs.~Regularization paths]\label{prop:onl-vs-reg} For any $t\in \NN^*$, it holds
$$
\Vert \pi^{\onl}_t\ones-\pi_t^\reg\ones\Vert_1\leq \sqrt{\frac{2\alpha_t\log(mn)}{\beta_t}}.
$$
In particular, for polynomial schedules $\beta_t=\beta_0(1+t)^\kappa$, $\kappa>0$, this bound is in $O(t^{-1/2})$.
\end{proposition}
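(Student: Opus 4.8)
The plan is to play the two defining minimization problems~\eqref{eq:online-path} and~\eqref{eq:Fprox} off against each other, using the quadratic growth of $F^{\mathrm{reg}}_t$ in the first marginal around its minimizer. The first step is a pair of feasibility observations. On one hand, $\pi^{\reg}_t$ is itself admissible for the online problem~\eqref{eq:online-path}: setting $p^{\reg}_t=\pi^{\reg}_t\ones$, the penalty $\tfrac1{\alpha_t}\KL(\pi\ones|p)$ in $F^{\mathrm{reg}}_t$ is constant over $\Gamma(p^{\reg}_t,q)$, so $\pi^{\reg}_t$ minimizes $\langle c,\pi\rangle+\tfrac1{\beta_t}\KL(\pi|\pi^{\mathrm{ref}})$ there, i.e.\ it is the $\EOT_{\beta_t}(p^{\reg}_t,q)$ minimizer and hence lies in $\Cc_{\beta_t}$ by Lem.~\ref{lem:gibbs-plan-optimal}; together with $\pi^{\reg}_t\in\Gamma(\ast,q)$ this makes it feasible in~\eqref{eq:online-path}. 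On the other hand, $\pi^{\onl}_t\in\Gamma(\ast,q)$ is trivially feasible in~\eqref{eq:Fprox}. Optimality of $\pi^{\onl}_t$ in~\eqref{eq:online-path} then yields the key inequality
\[
\langle c,\pi^{\onl}_t\rangle+\tfrac1{\alpha_t}\KL(\pi^{\onl}_t\ones|p)\;\le\;\langle c,\pi^{\reg}_t\rangle+\tfrac1{\alpha_t}\KL(\pi^{\reg}_t\ones|p).\qquad(\star)
\]

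Next I would use that $\pi^{\reg}_t$ minimizes $F^{\mathrm{reg}}_t$ over the convex set $\Gamma(\ast,q)$. The Bregman divergence of $F^{\mathrm{reg}}_t$ between $\pi^{\onl}_t$ and $\pi^{\reg}_t$ is $\tfrac1{\alpha_t}\KL(\pi^{\onl}_t\ones|\pi^{\reg}_t\ones)+\tfrac1{\beta_t}\KL(\pi^{\onl}_t|\pi^{\reg}_t)$, which by Pinsker's inequality (Lem.~\ref{lem:pinsker}) is at least $\tfrac1{2\alpha_t}\|\pi^{\onl}_t\ones-\pi^{\reg}_t\ones\|_1^2$. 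Combining the three-point identity with first-order optimality of $\pi^{\reg}_t$ (valid since $\pi^{\reg}_t$ has strictly positive entries, so $F^{\mathrm{reg}}_t$ is differentiable there: $q>0$ handles the columns, and a vanishing row would make the $\KL(\cdot|p)$ term improvable) and the above lower bound gives
\[
\tfrac1{2\alpha_t}\|\pi^{\onl}_t\ones-\pi^{\reg}_t\ones\|_1^2\;\le\;F^{\mathrm{reg}}_t(\pi^{\onl}_t)-F^{\mathrm{reg}}_t(\pi^{\reg}_t).
\]
Expanding $F^{\mathrm{reg}}_t$, the right-hand side equals the difference of the two sides of $(\star)$, which is $\le0$, plus $\tfrac1{\beta_t}\big(\KL(\pi^{\onl}_t|\pi^{\mathrm{ref}})-\KL(\pi^{\reg}_t|\pi^{\mathrm{ref}})\big)\le\tfrac1{\beta_t}\KL(\pi^{\onl}_t|\pi^{\mathrm{ref}})$; and since $\pi^{\mathrm{ref}}=(mn)^{-1}\ones\ones^\top$ is uniform, $\KL(\pi^{\onl}_t|\pi^{\mathrm{ref}})=\log(mn)+\sum_{i,j}(\pi^{\onl}_t)_{ij}\log\big((\pi^{\onl}_t)_{ij}\big)\le\log(mn)$. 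Rearranging yields $\|\pi^{\onl}_t\ones-\pi^{\reg}_t\ones\|_1^2\le2\alpha_t\log(mn)/\beta_t$, the claimed bound. For $\beta_t=\beta_0(1+t)^\kappa$ with $\kappa>0$, the mean value theorem gives $\alpha_t=\beta_0\big((1+t)^\kappa-t^\kappa\big)=\Theta(t^{\kappa-1})$, so $\alpha_t/\beta_t=\Theta(t^{-1})$ and the bound is $O(t^{-1/2})$.

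The step I expect to be the main obstacle is structural rather than computational: recognizing that $\pi^{\reg}_t$ is feasible for the online problem — which rests on Lem.~\ref{lem:gibbs-plan-optimal} together with the observation that fixing the first marginal freezes the $\KL(\cdot|p)$ term — and, symmetrically, that $\pi^{\onl}_t$ is feasible for the regularized one, so that the two minimality statements can be chained, first into $(\star)$ and then into the growth estimate. Everything downstream is routine: Pinsker applied to a Bregman divergence for the quadratic growth in $\pi\ones$, and the uniform-reference bound $\KL(\cdot|\pi^{\mathrm{ref}})\le\log(mn)$. The only technical care needed is the differentiability of $F^{\mathrm{reg}}_t$ at $\pi^{\reg}_t$ invoked above, which is where positivity of $q$ (and of $p$) is used.
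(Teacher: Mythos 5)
Your proof is correct and follows essentially the same route as the paper: both establish $F^{\reg}_t(\pi^{\onl}_t)-F^{\reg}_t(\pi^{\reg}_t)\le\log(mn)/\beta_t$ by combining the feasibility of $\pi^{\reg}_t$ in the online problem (via Lem.~\ref{lem:gibbs-plan-optimal}) with the uniform-reference bound on $\KL(\cdot|\pi^{\mathrm{ref}})$, and both then convert this into $\KL(\pi^{\onl}_t\ones|\pi^{\reg}_t\ones)\le\alpha_t\log(mn)/\beta_t$ using the strong-convexity-in-the-first-marginal inherited from $\tfrac1{\alpha_t}\KL(\cdot|p)$, before finishing with Pinsker. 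The only cosmetic difference is that the paper packages the quadratic-growth step through the marginalized value function $G(p')=\min_{\pi\in\Gamma(p',q)}\alpha_t F^{\reg}_t(\pi)$ and relative strong convexity, whereas you compute the Bregman divergence of $F^{\reg}_t$ directly at the level of $\pi$ and drop the nonnegative $\tfrac1{\beta_t}\KL(\pi^{\onl}_t|\pi^{\reg}_t)$ term; these are two expressions of the same estimate.
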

\begin{proof}
Since for any $\pi\in \Delta_{m\times n}$
we have $0\leq \KL(\pi|\pi^{\mathrm{ref}})\leq \log(mn)$, it follows
\begin{align*}
F_t^{\reg}(\pi^\onl_t)- F_t^\reg(\pi^\reg_t) 
& \leq \Big\{\langle c, \pi^{\onl}_t-\pi^{\reg}_t\rangle + \frac{1}{\alpha_t}\Big(\KL(\pi^{\onl}_t\ones|p)-\KL(\pi^{\reg}_t\ones|p)\Big)\Big\}+\frac{1}{\beta_t} \log(mn).
\end{align*}
By optimality of $\pi^{\onl}_t$ in Eq.~\eqref{eq:online-path}, and since $\pi^{\reg}_t\in \Cc_{\beta_t}$, the quantity in curly brackets is nonpositive. It follows $F^{\reg}_t(\pi^{\onl}_t)- F^{\reg}_t(\pi^{\reg}_t) \leq \frac{1}{\beta_t} \log(mn)$. Now, for $p'\in \Delta^m$, consider the function 
$$
G(p') \coloneqq \min_{\pi \in \Gamma(p',q)} \alpha_t F_t^{\reg}(\pi) = \KL(p'|p) +\Big\{\min_{\pi \in \Gamma(p',q)} \alpha_t \langle c, \pi\rangle +\frac{\alpha_t}{\beta_t}\KL(\pi|\pi^{\mathrm{ref}})\Big\}.
$$
This function satisfies $G(\pi\ones) \leq \alpha_t F^{\reg}_t(\pi)$ and for any $\pi\in \Gamma(\ast,q)$ with equality for $\pi^{\reg}_t$ and $\pi^{\reg}_t\ones$ is the unique global minimizer of $G$. The quantity in brackets is equal to  $\alpha_t \EOT_{\beta_t}(p',q)$, which is a convex function of $p'$ (this can be seen from the dual formulation in Lem.~\ref{lem:gibbs-plan-optimal}: it is a supremum of linear forms of $p'$, hence convex). Since $\KL(\cdot|p)$ is $1$-relatively strongly convex with respect to itself, we have, for any $p'\in \Delta_m$, that $
\KL(p',\pi^{\reg}_t\ones) \leq G(p')-G(\pi^{\reg}_t\ones).
$
Applying this strong-convexity estimate to $\pi^{\onl}_t\ones$, it follows 
$$
\KL(\pi^{\onl}_t\ones|\pi^{\reg}_t\ones) \leq G(\pi^{\onl}_t\ones)-G(\pi^{\reg}_t\ones) \leq \alpha_t \big(F^{\reg}_t(\pi_t^{\onl})-F^{\reg}_t(\pi_t^{\reg}) \big)\leq \frac{\alpha_t}{\beta_t}\log(mn).
$$
The estimate of Prop.~\ref{prop:onl-vs-reg} then follows by Pinsker's inequality (Lem.~\ref{lem:pinsker}). For a polynomial annealing schedule $\beta_t = \beta_0(1+t)^\kappa=\Theta(t^{\kappa})$ with $\kappa>0$, we have $\alpha_t=\Theta(t^{\kappa-1})$ hence $\sqrt{\alpha_t/\beta_t} = O(t^{-1/2})$.
\end{proof}

\subsection{Convergence rate of the regularization path}
Now that we have motivated the regularization path as a relevant proxy for the optimization path, let us quantify the evolution of its sub-optimality. To do this properly, we adopt the methodology presented in Section~\ref{sec:opt-quantitative}, and bound the sub-optimality after ``projection'' on the set of transport plans. 

\begin{theorem}[Convergence rate of the regularization path]\label{thm:regularization-path}
Let $(\pi^\mathrm{reg}_t )$ be the regularization path associated to an annealing schedule $(\beta_t)$, and let $\bar \pi_t \coloneqq \mathrm{proj}_{\Gamma(p,q)}(\pi^\mathrm{reg}_t)$ be the output of the projection algorithm Alg.~\ref{alg:projection} applied to $\pi_t^\mathrm{reg}$, which satisfies $\bar \pi_t\in \Gamma(p,q)$. It holds
\begin{align}\label{eq:reg-path-bound}
\langle c,\bar \pi_t\rangle- \OT(p,q) \leq \underbrace{\frac{\log(mn)}{\beta_t}}_{\text{Entropic error}} + \underbrace{\frac{8\alpha_t \beta_t}{\alpha_t+\beta_t} \Big( \frac{4\Vert \log(p)\Vert_\infty}{\beta_t}  + \Vert c\Vert_\mathrm{osc}  \Big)}_{\text{Relaxation error}}
\end{align}
where $\Vert c\Vert_\mathrm{osc} \coloneqq \max_{i,j} c_{i,j} - \min_{i,j} c_{i,j}$. In particular, for $\beta_t = (1+t)^\kappa$ with $\kappa\in {]0,1[}$, it holds
\begin{align}\label{eq:rate-reg-path}
\langle c,\bar \pi_t\rangle- \langle c, \pi^*\rangle = \underbrace{O(t^{-\kappa})}_{\text{Entropic error}}+ \underbrace{O(t^{\kappa-1})}_{\text{Relaxation error}}
\end{align}
and the best upper bound on the rate $O(t^{-1/2})$ is obtained with $\kappa=1/2$.
\end{theorem}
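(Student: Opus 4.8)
The plan is to combine the quantitative rounding bound of Lemma~\ref{lem:altschuler} with an explicit control of how far the first marginal of the regularization path is from $p$. First observe that, among all plans in $\Gamma(\pi^\reg_t\ones, q)$, the plan $\pi^\reg_t$ minimizes $\langle c,\pi\rangle+\beta_t^{-1}\KL(\pi|\pi^\mathrm{ref})$ (the term $\KL(\pi\ones|p)$ is constant on that set), so by Lemma~\ref{lem:gibbs-plan-optimal} it is of the Gibbs form $\diag(a)e^{-\beta_t c}\diag(b)$ and has \emph{exact} second marginal $q$. Applying Lemma~\ref{lem:altschuler} to $\pi^\reg_t$ therefore gives $\langle c,\bar\pi_t\rangle-\OT(p,q)\le \beta_t^{-1}\log(mn)+4\Vert\pi^\reg_t\ones-p\Vert_1\Vert c\Vert_\mathrm{osc}$, which already produces the entropic error; it remains to bound $\Vert\pi^\reg_t\ones-p\Vert_1$.

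For this I would reparametrize by the first marginal: $\tilde p_t:=\pi^\reg_t\ones$ is the minimizer over $p'\in\Delta_m$ of $\alpha_t^{-1}\KL(p'|p)+\EOT_{\beta_t}(p',q)$. Writing the KKT conditions of the original minimization in $\pi$, with a multiplier $g\in\RR^n$ for the constraint $\pi^\top\ones=q$, stationarity gives $\pi^\reg_{t,ij}=\pi^\mathrm{ref}_{ij}\,e^{\beta_t g_j}\,e^{-\beta_t c_{ij}}\,(\tilde p_{t,i}/p_i)^{-\beta_t/\alpha_t}$; summing over $j$ and using $\sum_j\pi^\reg_{t,ij}=\tilde p_{t,i}$ yields the fixed-point identity $\log\tilde p_{t,i}=\tfrac{\beta_t}{\alpha_t+\beta_t}\log p_i-\tfrac{\alpha_t\beta_t}{\alpha_t+\beta_t}h_i+\mathrm{const}$, where $h_i:=-\beta_t^{-1}\log\sum_j\pi^\mathrm{ref}_{ij}e^{\beta_t(g_j-c_{ij})}$. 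The key point — and the place where uniformity of $\pi^\mathrm{ref}=(mn)^{-1}\ones\ones^\top$ enters — is the $\beta_t$-independent bound $\mathrm{osc}(h)\le\Vert c\Vert_\mathrm{osc}$: for rows $i,i'$ the ratio $e^{-\beta_t h_i}/e^{-\beta_t h_{i'}}$ equals $\sum_j w_je^{-\beta_t c_{ij}}\big/\sum_j w_je^{-\beta_t c_{i'j}}$ with $w_j\ge 0$, and since $|c_{ij}-c_{i'j}|\le\Vert c\Vert_\mathrm{osc}$ this ratio lies in $[e^{-\beta_t\Vert c\Vert_\mathrm{osc}},e^{\beta_t\Vert c\Vert_\mathrm{osc}}]$. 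Solving the identity for $s_i:=\log(\tilde p_{t,i}/p_i)$ gives $s_i=\mathrm{const}-\tfrac{\alpha_t}{\alpha_t+\beta_t}(\log p_i+\beta_t h_i)$, hence $\mathrm{osc}(s)\le\tfrac{\alpha_t}{\alpha_t+\beta_t}\big(\mathrm{osc}(\log p)+\beta_t\,\mathrm{osc}(h)\big)\le\tfrac{\alpha_t\beta_t}{\alpha_t+\beta_t}\big(2\Vert\log p\Vert_\infty\beta_t^{-1}+\Vert c\Vert_\mathrm{osc}\big)$.

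The conversion to $\Vert\cdot\Vert_1$ is then routine. Since $\tilde p_t,p\in\Delta_m$ and $\sum_i p_ie^{s_i}=1=\sum_i p_i$, some $s_i$ is $\le 0$ and some is $\ge 0$, so $\Vert s\Vert_\infty\le\mathrm{osc}(s)$; convexity of $\KL(\cdot|p)$ (whose gradient at $\tilde p_t$ is $s$), together with $\tilde p_t-p\perp\ones$, gives $\KL(\tilde p_t|p)\le\langle s,\tilde p_t-p\rangle\le\tfrac12\mathrm{osc}(s)\Vert\tilde p_t-p\Vert_1$, and Pinsker's inequality (Lemma~\ref{lem:pinsker}) upgrades this to $\Vert\tilde p_t-p\Vert_1\le\mathrm{osc}(s)$. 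Substituting into the Altschuler bound and collecting constants yields a bound of the form~\eqref{eq:reg-path-bound}. For the rate~\eqref{eq:rate-reg-path}, with $\beta_t=(1+t)^\kappa$ the mean value theorem gives $\alpha_t=\Theta(t^{\kappa-1})$, so $\alpha_t=o(\beta_t)$ and $\tfrac{\alpha_t\beta_t}{\alpha_t+\beta_t}=\Theta(t^{\kappa-1})$; the entropic term is $\Theta(t^{-\kappa})$, the relaxation term $O(t^{\kappa-1})$, and $\max\{t^{-\kappa},t^{\kappa-1}\}$ is minimized at $\kappa=1/2$.

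I expect the fixed-point step to be the crux: extracting the clean relation $\log\tilde p_{t,i}=\tfrac{\beta_t}{\alpha_t+\beta_t}\log p_i-\tfrac{\alpha_t\beta_t}{\alpha_t+\beta_t}h_i+\mathrm{const}$ from the KKT system, and in particular the $\beta_t$-independent estimate $\mathrm{osc}(h)\le\Vert c\Vert_\mathrm{osc}$, which is exactly what makes the relaxation error scale like $\alpha_t\beta_t/(\alpha_t+\beta_t)$ rather than just $\alpha_t$ — a cruder Lipschitz argument through $\nabla\EOT_{\beta_t}(\cdot,q)$ would only give the weaker $O(\alpha_t)$, which still suffices for the asymptotic rate but not for the sharp prefactor. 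Everything else is elementary manipulation of $\KL$ and Pinsker's inequality.
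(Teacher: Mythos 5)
Your proposal is correct and matches the paper's proof strategy for Theorem~\ref{thm:regularization-path} (via Lemma~\ref{lem:ell1-regpath}) in all essentials: applying Lemma~\ref{lem:altschuler} to $\pi_t^{\reg}\in\Gamma(\ast,q)$ to isolate $\Vert\pi_t^{\reg}\ones-p\Vert_1$, deriving the same fixed-point relation $\log\tilde p_{t,i}=\tfrac{\beta_t}{\alpha_t+\beta_t}\log p_i-\tfrac{\alpha_t\beta_t}{\alpha_t+\beta_t}h_i$ for the first marginal (the paper obtains it from the Fenchel dual of Lemma~\ref{lem:duality-proxy}, you from the primal KKT system — the two are equivalent, with $s=-\alpha u$), the $\beta$-independent softmax estimate $\mathrm{osc}(h)\le\Vert c\Vert_\mathrm{osc}$, and the normalization trick $\sum_i p_ie^{s_i}=1$ to center the oscillation. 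The one place where you diverge is the final conversion from the $\ell_\infty$ control of $s=\log(\tilde p_t\oslash p)$ to the $\ell_1$ bound on the marginal: the paper uses the elementary inequality $\vert e^{-x}-1\vert\le 2\vert x\vert$ for $\vert x\vert\le 1$ (with a patch for the large-$\vert x\vert$ case), whereas you combine convexity of $\KL(\cdot\vert p)$, the orthogonality $\tilde p_t-p\perp\ones$, and Pinsker to get $\Vert\tilde p_t-p\Vert_1\le\mathrm{osc}(s)$ directly; this avoids the case split and saves a constant factor, but does not change the route or the rate.
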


\begin{figure}
\centering
\begin{subfigure}{0.5\linewidth}
\centering
\includegraphics[scale=0.45]{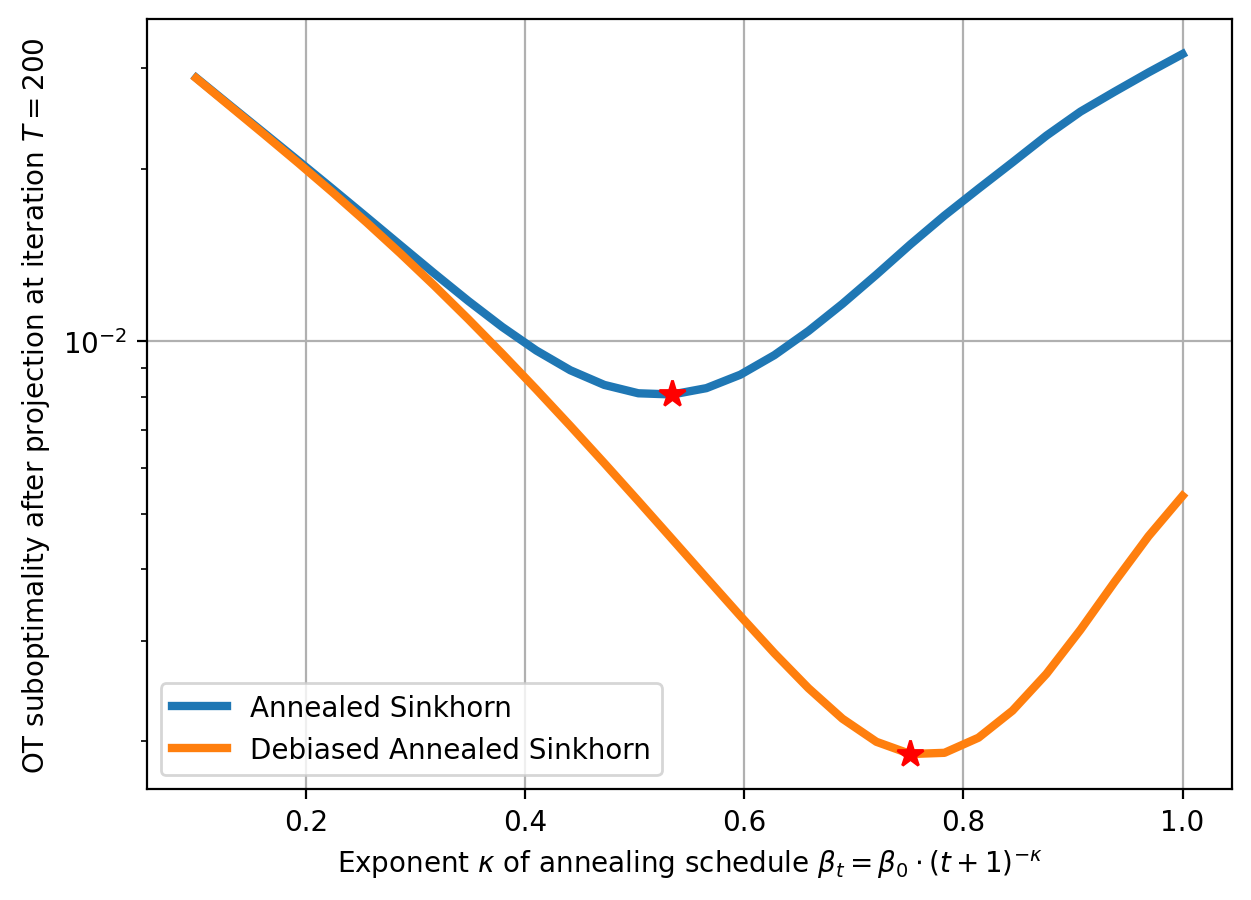}
\caption{Suboptimality at $t=200$ vs.~exponent $\kappa$}\label{fig:exponent}
\end{subfigure}%
\begin{subfigure}{0.5\linewidth}
\centering
\includegraphics[scale=0.45]{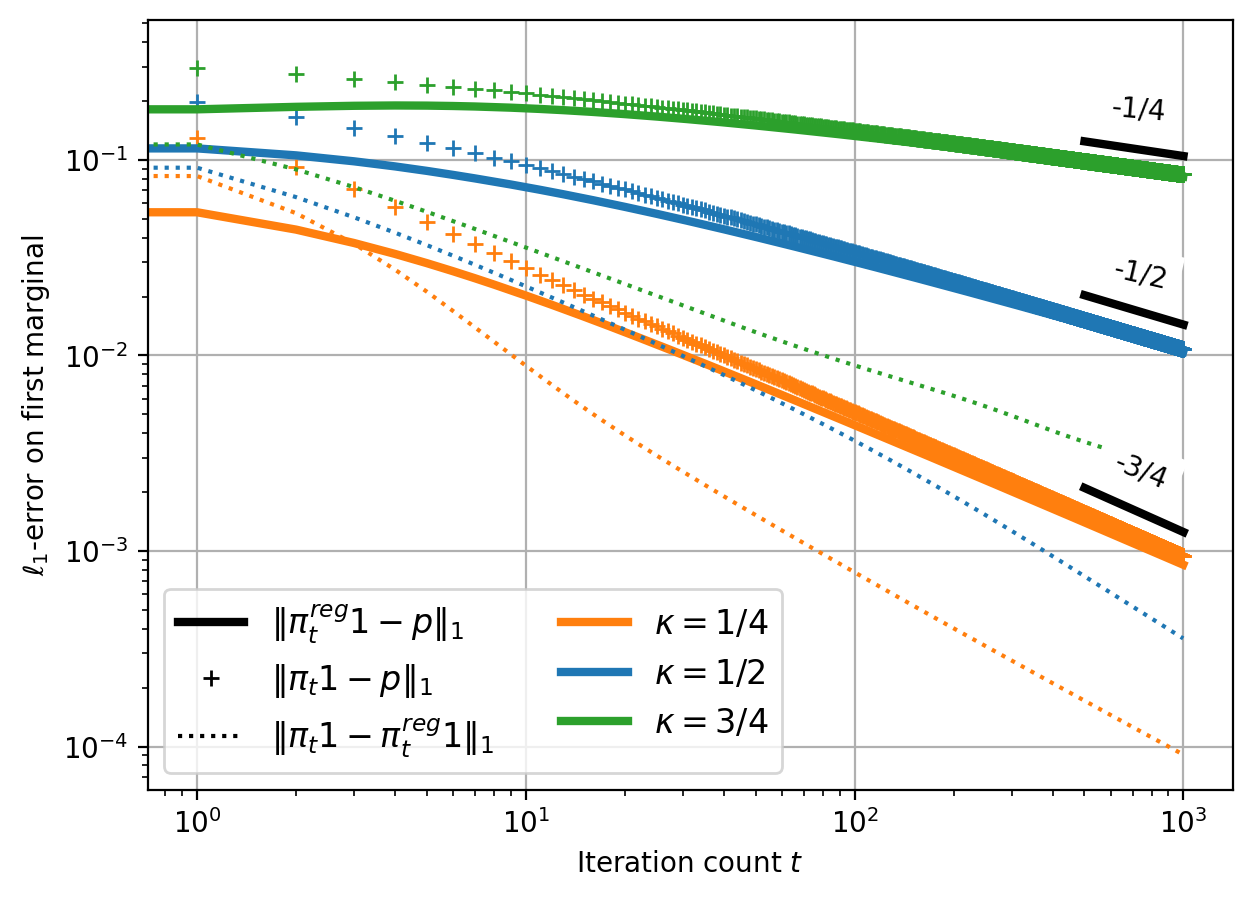}
\caption{Comparing optimization vs.~regularization paths}\label{fig:optim-vs-proxy}
\end{subfigure}
\caption{(left) Optimality gap at iteration $t=200$ as a function of the annealing exponent $\kappa$, such that $\beta_t=\beta_0 (1+t)^{-\kappa}$ (with $\beta_0=10/\Vert c\Vert_\mathrm{osc}$). The optimal exponent ({\color{red} $\star$}) are close to the predicted ones: $\kappa=1/2$ for Annealed Sinkhorn and $\kappa=2/3$ for Debiased Annealed Sinkhorn. (right) Distance between the optimization path and the regularization path (dotted lines) compared to their distances to the target. As predicted by Lem.~\ref{lem:ell1-regpath}, $\Vert \pi_t^\reg\ones -p\Vert_1$ is in $\Theta(t^{\kappa-1})$, and it closely approximates $\Vert \pi_t\ones -p\Vert_1$. The error between both paths (dotted lines) is more than an order of magnitude less. (Both experiments use the ``geometric'' cost).}
\end{figure}

As shown in Fig.~\ref{fig:exponent}, the optimal error achieved in practice is indeed with $\kappa \approx \frac12$, and the rate $O(t^{\kappa-1})$ indeed corresponds to the relaxation error (Fig.~\ref{fig:optim-vs-proxy}). Let us discuss the various terms appearing in Eq.~\eqref{eq:reg-path-bound}:
\begin{itemize}
\item \textbf{Entropic error}: The first term, in $O(\beta_t^{-1})$ is a consequence of the (implicit) entropic regularization. For the solution $\pi^*_\beta$ of $\EOT_{\beta}(p,q)$, this bound is generally of the correct order, although for very large values of $\beta$ (larger than the inverse ``discrete energy gap''), the entropic error decays exponentially~\citep{cominetti1994asymptotic, weed2018explicit}. This phenomenon is however specific to the discrete case and in continuous settings, the rate $O(\beta^{-1})$ is tight in general. For instance, it is shown in~\cite[Thm.~2.3]{malamut2023convergence} that under regularity assumptions, for the computation of the squared Wasserstein distance $W_2^2$ in dimension $d$, it holds $\langle c, \pi^*_\beta\rangle = \langle c, \pi^*_0\rangle + \frac{d}{\beta}+o(\beta^{-1})$.
\item \textbf{Relaxation error:} The second term, after using that $\alpha_t = o(\beta_t)$, is in $O(\alpha_t)$. It is a consequence of the implicit $\KL$-relaxation induced by Annealed Sinkhorn, a phenomenon exhibited by our analysis. This rate is also tight in general -- as long as the Kantorovich potentials $(u_\infty,v_\infty)$ for $\OT(p,q)$ are \emph{not} constant -- because then, combining Eq.~\eqref{eq:proxy-primal-dual} and Lem.~\ref{lem:gibbs-plan-optimal} we have asymptotically as $\beta_t\to \infty$ and $\alpha_t\to0$,  that 
\begin{align*}
\pi_t^\reg\ones = e^{-\alpha_t u^\reg_t}\odot p \propto e^{-\alpha_t u_\infty+ o(\alpha_t)}\odot p ,&& \text{hence} && \Vert \pi_t^\reg\ones - p\Vert_1 = \Theta(\alpha_t).
\end{align*}
\end{itemize}

When compared to the typical usage of Annealed Sinkhorn in practice -- where annealing schedules much faster than $\sqrt{t}$ are used -- one might interpret Thm.~\ref{thm:regularization-path} as a negative result. It tells us that no matter how nice the structure of the OT problem (such as continuous densities, multiscale structure, etc), the convergence rate of Annealed Sinkhorn cannot be faster than $1/\sqrt{t}$, achieved with the schedule $\beta_t=\Theta(\sqrt{t})$ (and only this one). Note that this does not exclude the possibility to get stronger benefits from Annealed Sinkhorn in case the goal is to solve entropic OT rather than OT (i.e. when the final inverse temperature is $\beta_\infty<+\infty$).

\begin{proof}
By Lem.~\ref{lem:altschuler} and since $\pi_t^\mathrm{reg}\in \Gamma(\ast,q)$, it holds
$$
\langle c, \bar \pi_t\rangle -\langle c, \pi^*\rangle \leq \beta_t^{-1}\log(mn)+4\Vert \pi^{\mathrm{reg}}_t\ones  - p\Vert_1.
$$
It remains to plug-in the bound from Lem.~\ref{lem:ell1-regpath} to obtain the theorem.
\end{proof}

\begin{lemma}\label{lem:ell1-regpath}
The regularization path $(\pi_t^{\mathrm{reg}})$ satisfies
$$
\Vert \pi_t^{\mathrm{reg}}\ones-p\Vert_1 \leq \frac{2\alpha_t \beta_t}{\alpha_t+\beta_t} \Big( \frac{4\Vert \log(p)\Vert_\infty}{\beta_t}  + \Vert c\Vert_\mathrm{osc}  \Big).
$$
\end{lemma}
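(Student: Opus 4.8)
The plan is to characterize the first marginal $p^{\mathrm{reg}}_t \coloneqq \pi^{\mathrm{reg}}_t\ones$ of the regularization path through first-order optimality, bound the oscillation of the associated dual potential, and then convert this into the $\ell_1$ bound via Pinsker's inequality. Throughout, for a vector $x$ write $\mathrm{osc}(x)\coloneqq \max_i x_i - \min_i x_i$.

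\emph{Step 1: optimality conditions.} Exactly as in the proof of Prop.~\ref{prop:onl-vs-reg}, partial minimization of $F^{\mathrm{reg}}_t$ in $\pi$ shows that $p^{\mathrm{reg}}_t$ is the unique minimizer over $p'\in\Delta_m$ of $H(p')\coloneqq \tfrac1{\alpha_t}\KL(p'|p) + \EOT_{\beta_t}(p',q)$, and this minimizer is interior (the $\tfrac1{\beta_t}\KL(\pi|\pi^{\mathrm{ref}})$ term forces $\pi^{\mathrm{reg}}_t$, hence $p^{\mathrm{reg}}_t$, to have positive entries, while $\nabla\KL(\cdot|p)$ diverges to $-\infty$ at the boundary). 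Using that $\nabla_{p'}\EOT_{\beta_t}(\cdot,q)|_{p^{\mathrm{reg}}_t}$ equals the first-marginal Kantorovich potential $u$ of $\EOT_{\beta_t}(p^{\mathrm{reg}}_t,q)$ (the dual formulation of Lem.~\ref{lem:gibbs-plan-optimal}), the first-order condition reads, for some $\lambda\in\RR$,
$$
\tfrac1{\alpha_t}\log\!\big(p^{\mathrm{reg}}_t\oslash p\big) + u = \lambda\ones,
\qquad\text{equivalently}\qquad
p^{\mathrm{reg}}_t = \frac{p\odot e^{-\alpha_t u}}{\langle p, e^{-\alpha_t u}\rangle},
$$
which is Eq.~\eqref{eq:proxy-primal-dual}. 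Everything below uses only $\mathrm{osc}(u)$ and $\langle p - p^{\mathrm{reg}}_t, u\rangle$, both invariant under shifting $u$ by a multiple of $\ones$, so the ($\ones$-shift) non-uniqueness of $u$ is harmless.

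\emph{Step 2: oscillation bound (the crux).} Since $\pi^{\mathrm{reg}}_t$ is the optimal plan of $\EOT_{\beta_t}(p^{\mathrm{reg}}_t,q)$, it has the Gibbs form $(\pi^{\mathrm{reg}}_t)_{ij} = \pi^{\mathrm{ref}}_{ij}e^{\beta_t(u_i + v_j - c_{ij})}$ for some $v\in\RR^n$; summing over $j$ and using $\pi^{\mathrm{reg}}_t\ones = p^{\mathrm{reg}}_t$ gives $u_i = \tfrac1{\beta_t}\log p^{\mathrm{reg}}_{t,i} - \phi_i$ with $\phi_i \coloneqq \tfrac1{\beta_t}\log\!\big(\sum_j \pi^{\mathrm{ref}}_{ij} e^{\beta_t(v_j - c_{ij})}\big)$. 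As $\pi^{\mathrm{ref}}_{ij}=(mn)^{-1}$ does not depend on $i$ and $c_{i'j}-c_{ij}\le \Vert c\Vert_{\mathrm{osc}}$ for all $i,i',j$, the standard soft-min estimate yields $\mathrm{osc}(\phi)\le \Vert c\Vert_{\mathrm{osc}}$. Substituting $\log p^{\mathrm{reg}}_t = \log p - \alpha_t u + \alpha_t\lambda\ones$ from Step 1 and solving the resulting affine equation for $u$ produces the harmonic-combination identity $u = \tfrac{\beta_t}{\alpha_t+\beta_t}\big(\tfrac1{\beta_t}\log p - \phi\big) + \mathrm{const}\cdot\ones$, whence
$$
\mathrm{osc}(u) \le \frac{\beta_t}{\alpha_t+\beta_t}\Big(\frac{\mathrm{osc}(\log p)}{\beta_t} + \Vert c\Vert_{\mathrm{osc}}\Big)
\le \frac{\beta_t}{\alpha_t+\beta_t}\Big(\frac{2\Vert\log p\Vert_\infty}{\beta_t} + \Vert c\Vert_{\mathrm{osc}}\Big).
$$

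\emph{Step 3: $\ell_1$ conversion.} From $p^{\mathrm{reg}}_t = (p\odot e^{-\alpha_t u})/Z$ with $Z=\langle p, e^{-\alpha_t u}\rangle$ one computes $\KL(p^{\mathrm{reg}}_t|p) = -\alpha_t\langle p^{\mathrm{reg}}_t, u\rangle - \log Z$, and Jensen's inequality ($\log Z \ge -\alpha_t\langle p, u\rangle$) gives $\KL(p^{\mathrm{reg}}_t|p)\le \alpha_t\langle p - p^{\mathrm{reg}}_t, u\rangle$. Re-centering $u$ (allowed since $p,p^{\mathrm{reg}}_t\in\Delta_m$) so that $\Vert u - \overline u\ones\Vert_\infty = \tfrac12\mathrm{osc}(u)$, then Hölder and Pinsker (Lem.~\ref{lem:pinsker}) give $\Vert p^{\mathrm{reg}}_t - p\Vert_1^2 \le 2\KL(p^{\mathrm{reg}}_t|p) \le \alpha_t \Vert p - p^{\mathrm{reg}}_t\Vert_1 \, \mathrm{osc}(u)$, hence $\Vert p^{\mathrm{reg}}_t - p\Vert_1 \le \alpha_t\,\mathrm{osc}(u)$. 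Inserting the bound from Step 2 (and enlarging the harmless constants to match the statement) gives $\Vert \pi^{\mathrm{reg}}_t\ones - p\Vert_1 \le \tfrac{\alpha_t\beta_t}{\alpha_t+\beta_t}\big(\tfrac{4\Vert\log p\Vert_\infty}{\beta_t} + \Vert c\Vert_{\mathrm{osc}}\big)$, which implies the claim. I expect the only genuinely delicate point to be Step 2: obtaining the coupled identity $u\,(1+\alpha_t/\beta_t) = \tfrac1{\beta_t}\log p - \phi + \mathrm{const}\cdot\ones$ — which is what creates the harmonic weight $\tfrac{\alpha_t\beta_t}{\alpha_t+\beta_t}$ — while consistently tracking the constant-shift ambiguity of $u$ and $v$; the soft-min oscillation estimate and the Jensen/Pinsker conversion are routine.
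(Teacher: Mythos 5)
Your proof is correct and in fact gives a strictly tighter bound, $\frac{\alpha_t\beta_t}{\alpha_t+\beta_t}\big(\frac{2\Vert\log p\Vert_\infty}{\beta_t}+\Vert c\Vert_{\mathrm{osc}}\big)$, which you then relax to match the stated constant. The heart of both arguments is identical — derive the affine self-consistency equation for the first dual potential $u$ (yours via combining the Gibbs form of $\pi^\reg_t$ with the first-order optimality of $p^\reg_t$; the paper's directly from the explicit dual in Lem.~\ref{lem:duality-proxy}), obtaining the harmonic coefficient $\frac{\beta_t}{\alpha_t+\beta_t}$ and the oscillation bound $\mathrm{osc}(u)\le \frac{\beta_t}{\alpha_t+\beta_t}\big(\frac{\mathrm{osc}(\log p)}{\beta_t}+\Vert c\Vert_{\mathrm{osc}}\big)$. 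Where you genuinely depart is the last conversion to an $\ell_1$ estimate: the paper works with the unnormalized identity $\pi^\reg\ones = e^{-\alpha u}\odot p$, passes through $\Vert u\Vert_\infty\le \mathrm{osc}(u)$ (using that $\min u\le 0\le \max u$), and invokes $|e^{-s}-1|\le 2|s|$ for $|s|\le 1$, which forces a separate case check when $\alpha\Vert u\Vert_\infty>1$ (there the trivial bound $\le 2$ is used). You instead write $p^\reg_t=(p\odot e^{-\alpha_t u})/Z$, bound $\KL(p^\reg_t|p)\le \alpha_t\langle p-p^\reg_t,u\rangle$ by Jensen, center $u$ and apply H\"older together with Pinsker to get $\Vert p^\reg_t-p\Vert_1\le \alpha_t\,\mathrm{osc}(u)$ unconditionally — cleaner, no case split, and a factor of $2$ better. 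Both routes are sound; your conversion is arguably the nicer one.
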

\begin{proof}
Let us fix some iteration $t\in \NN^*$, and drop the indices $t, \alpha_t,\beta_t$ to simplify notations.  First note that a naive bound directly using Pinsker's inequality $\Vert \pi^{\reg}\ones -p \Vert_1\leq \sqrt{2\KL(\pi^{\reg}\ones|p)}$ and then bounding this last term by using the optimality of  $\pi^\reg$ would give loose estimates of the wrong order. To obtain tighter estimates, let us use the duality structure of $F^\reg$. By Lem.~\ref{lem:duality-proxy}, the convex optimization problem \eqref{eq:Fprox}, admits the dual formulation
\begin{align}\label{eq:proxy-dual-bis}
\max_{u\in \RR^n, v\in \RR^m} \frac{1}{\alpha}\langle 1-e^{-\alpha u}, p\rangle + \langle v, q\rangle -\frac{1}{\beta}\langle e^{\beta (u\ones^\top +\ones v^\top - c)},\pi^{\mathrm{ref}}\rangle+\frac{1}{\beta}
\end{align}
and a primal-dual pair of optimizers satisfies $\pi^\reg\ones = e^{-\alpha u}\odot p$ and $\pi^\reg=e^{\beta(u\ones^\top +\ones v^\top - c)}\odot \pi^{\mathrm{ref}}$. In particular, we deduce the estimate
$$
\Vert \pi^\reg\ones-p\Vert_1 = \Vert e^{-\alpha u}\odot p-p\Vert_1 \leq \min\{\Vert e^{-\alpha u}-1\Vert_\infty,2\}.
$$
It remains thus to obtain $\ell_\infty$ bounds on the optimal dual variable $u$. The first order optimality condition of $u$ in Eq.~\eqref{eq:proxy-dual-bis} reads, for an optimal dual pair $(u,v)$ and $\forall i\in \{1,\dots,n\}$,
\begin{align*}
p_i e^{-\alpha u_i} = e^{\beta u_i} \sum_{j=1}^n e^{\beta(v_j-c_{i,j})}\pi_{i,j}^\mathrm{ref}, && \Leftrightarrow && u_i = \frac{1}{\alpha+\beta}\log(p_i) -\frac{1}{\alpha+\beta} \log \Big( \sum_{j=1}^n e^{\beta(v_j-c_{i,j})}\pi_{i,j}^\mathrm{ref}\Big).
\end{align*}
It follows that for any pair of indices $i,i'\in \{1,\dots,m\}$ it holds
$$
\vert u_i-u_{i'}\vert \leq \frac{2\vert \log(p_i/p_{i'})\vert}{\alpha+\beta}  + \frac{\beta\Vert c\Vert_\mathrm{osc} }{\alpha+\beta} 
$$ 
where the last term uses the fact that the softmax function  is $1$-Lipschitz continuous and that $\max_{j} \vert c_{i,j}-c_{i',j}\vert \leq \Vert c\Vert_\mathrm{osc}$. On the other hand, we know that $p\odot e^{-\alpha u}\in \Delta^m$ so it holds
\begin{align*}
e^{-\alpha \max\{u_i\}}\leq \sum e^{-\alpha u_i}p_i=1 \leq e^{-\alpha \min\{u_i\}}&&\Rightarrow&& \min\{u_i\}\leq 0\leq \max\{u_i\}.
\end{align*}
All in all, we obtain
$$
\Vert u\Vert_\infty \leq \frac{\beta}{\alpha+\beta} \Big( \frac{4\Vert \log(p)\Vert_\infty}{\beta}  + \Vert c\Vert_\mathrm{osc}  \Big).
$$
Using the convexity of the exponential function, it is not difficult to show that $\forall u \in [-1,1]$, it holds $\vert e^{-u}-1\vert \leq 2\vert u\vert$.  Hence we conclude that 
$$
\Vert \pi^{\mathrm{prox}}\ones-p\Vert_1 \leq \frac{2\alpha \beta}{\alpha+\beta} \Big( \frac{4\Vert \log(p)\Vert_\infty}{\beta}  + \Vert c\Vert_\mathrm{osc}  \Big).
$$
This inequality holds unconditionally because, when the upper bound on $\Vert \alpha u\Vert_\infty$ is larger than $1$, i.e. when the estimate $\vert e^{-u}-1\vert \leq 2\vert u\vert$ does not apply anymore, then the right hand side is larger than $2$, and we can always trivially bound the $\ell_1$-norm by $2$.
\end{proof}

\begin{lemma}[Duality]\label{lem:duality-proxy}
Consider the optimization problem in Eq.~\eqref{eq:Fprox} defining the regularization path for some $\alpha, \beta>0$. Let $F^*$ be its minimum and $\pi^*$ its minimizer. Then it holds
\begin{align}\label{eq:proxy-dual}
F^*= \max_{u\in \RR^n, v\in \RR^m} \frac{1}{\alpha}\langle 1-e^{-\alpha u}, p\rangle + \langle v, q\rangle -\frac{1}{\beta}\langle e^{\beta(u\ones^\top +\ones v^\top - c)},\pi^{\mathrm{ref}}\rangle+\frac{1}{\beta}
\end{align}
and, denoting $(u^*,v^*)$ the unique maximizer of this dual problem (which exists), it holds
\begin{align}\label{eq:proxy-primal-dual}
\pi^*\ones = e^{-\alpha u^*}\odot p&&\text{and}&&\pi^*=e^{\beta(u^*\ones^\top +\ones (v^*)^\top - c)}\odot \pi^{\mathrm{ref}}
\end{align}
\end{lemma}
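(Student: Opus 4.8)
The plan is to obtain \eqref{eq:proxy-dual} by Fenchel--Rockafellar duality and then read off \eqref{eq:proxy-primal-dual} from the attached optimality conditions. First I would rewrite the primal problem \eqref{eq:Fprox} as $\min_{\pi\in\RR^{m\times n}} h(\pi)+\phi(A\pi)$, where $A\pi=(\pi\ones,\pi^\top\ones)$, $h(\pi)=\langle c,\pi\rangle+\tfrac1\beta\KL(\pi|\pi^{\mathrm{ref}})$ (which is $+\infty$ unless $\pi\geq 0$ and finite on all of $\RR_+^{m\times n}$ since $\pi^{\mathrm{ref}}>0$), and $\phi(s,r)=\tfrac1\alpha\KL(s|p)+\iota_{\{q\}}(r)$. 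All three are proper, convex and lower semicontinuous, and $A^*(u,v)=u\ones^\top+\ones v^\top$. The conjugates are then computed entrywise: since $h$ and $\phi$ are separable, everything reduces to the elementary identity that the Legendre conjugate of $t\mapsto\tfrac1\gamma\big(t\log(t/w)-t+w\big)$ on $\RR_+$ is $z\mapsto\tfrac{w}{\gamma}(e^{\gamma z}-1)$. This gives $h^*(\xi)=\tfrac1\beta\langle\pi^{\mathrm{ref}},e^{\beta(\xi-c)}\rangle-\tfrac1\beta$ and $\phi^*(u,v)=\langle v,q\rangle+\tfrac1\alpha\langle p,e^{\alpha u}\rangle-\tfrac1\alpha$.

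Next I would assemble the Fenchel--Rockafellar dual $\max_{(u,v)}\,-h^*(-A^*(u,v))-\phi^*(u,v)$ and perform the harmless change of variables $(u,v)\mapsto(-u,-v)$ (the maximum is over all of $\RR^m\times\RR^n$), which, after using $\langle\ones,p\rangle=1$, turns the dual objective into exactly the right-hand side of \eqref{eq:proxy-dual}. To upgrade the ensuing weak-duality inequality to the equality $F^*=\max(\cdots)$ \emph{with the maximum attained}, one needs a constraint qualification; the textbook interior condition fails because $\dom\phi\subseteq\RR^m\times\{q\}$ has empty interior, so I would instead verify the relative-interior condition via the explicit witness $\pi=p\,q^\top$, which lies in $\mathrm{ri}(\dom h)=(\RR_+^*)^{m\times n}$ (as $p,q>0$) and satisfies $A\pi=(p,q)\in\mathrm{ri}(\dom\phi)=(\RR_+^*)^m\times\{q\}$. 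Fenchel--Rockafellar then yields strong duality together with existence of a dual maximizer $(u^*,v^*)$. (An alternative route is to write \eqref{eq:Fprox} as $\min_{p'\in\Delta^m}\tfrac1\alpha\KL(p'|p)+\EOT_\beta(p',q)$, insert the dual of $\EOT_\beta$ from Lem.~\ref{lem:gibbs-plan-optimal}, and exchange the $\min$--$\max$ over the compact set $\Delta^m$ by Sion's theorem.)

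The primal--dual relations \eqref{eq:proxy-primal-dual} then follow from the Fenchel--Rockafellar optimality conditions, which translate (back in the variables of \eqref{eq:proxy-dual}) into $\pi^*\in\partial h^*\big(u^*\ones^\top+\ones(v^*)^\top\big)$ and $(-u^*,-v^*)\in\partial\phi\big(\pi^*\ones,(\pi^*)^\top\ones\big)$. Differentiating $h^*$ gives $\pi^*=\pi^{\mathrm{ref}}\odot e^{\beta(u^*\ones^\top+\ones(v^*)^\top-c)}$, which in particular forces $\pi^*>0$, hence $\pi^*\ones>0$; the $s$-block of the second inclusion then reads $-u^*_i=\tfrac1\alpha\log\big((\pi^*\ones)_i/p_i\big)$, i.e.\ $\pi^*\ones=e^{-\alpha u^*}\odot p$, while its $r$-block merely re-states $(\pi^*)^\top\ones=q$. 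For uniqueness of $(u^*,v^*)$, observe that the dual objective equals a constant plus $-\tfrac1\alpha\langle p,e^{-\alpha u}\rangle+\langle v,q\rangle-\tfrac1\beta\langle\pi^{\mathrm{ref}},e^{\beta(u\ones^\top+\ones v^\top-c)}\rangle$: the term $-\tfrac1\alpha\langle p,e^{-\alpha u}\rangle$ is strictly concave in $u$ (using $p>0$) and the rest is concave, so any two maximizers agree in $u$; and for $u$ fixed, $v\mapsto(\text{dual objective})$ is strictly concave because each $v_j$ enters a strictly convex exponential with a positive weight, which pins down $v$. It is precisely the $\KL(\pi\ones|p)$ term that breaks the usual invariance $(u,v)\mapsto(u+t\ones,v-t\ones)$ of entropic-OT duals, and this is what makes the maximizer unique.

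The step I expect to require the most care is promoting weak duality to strong duality \emph{with an attained maximum}: because the hard marginal constraint $\pi^\top\ones=q$ destroys the naive Slater condition, one must argue through the relative-interior qualification (or via the $\min$--$\max$ exchange). Everything else --- the separable conjugate computations, the sign bookkeeping in the change of variables, and extracting \eqref{eq:proxy-primal-dual} from the subdifferential inclusions --- is routine.
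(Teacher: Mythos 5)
Your proof is correct and follows essentially the same route as the paper: Fenchel--Rockafellar duality applied to the same split (the paper writes it as $A(\pi)+B_1(\pi\ones)+B_2(\pi^\top\ones)$, you merge $B_1,B_2$ into a single $\phi$), with the same entrywise $\KL$-conjugate computations and the same strict-concavity argument for uniqueness. The one place you are more explicit than the paper is the constraint qualification: the paper justifies the min--max interchange by invoking continuity of the dual objective, while you verify the relative-interior condition with the witness $\pi=pq^\top$, which is the cleaner way to also get \emph{attainment} of the dual maximum.
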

\begin{proof}
This is an application of Fenchel's duality theorem. The minimization problem of Eq.~\eqref{eq:Fprox}  is of the form
\begin{align*}
\min_{\pi\in \RR^{m\times n}} A(\pi) + B_1(\pi\ones) + B_2(\pi^\top \ones) &&\text{where}&& 
\left\{
\begin{aligned}
A(\pi) &= \frac{1}{\beta} \KL(\pi |\pi^{\mathrm{ref}}) +\langle c, \pi\rangle\\
B_1(\tilde p) &= \frac{1}{\alpha}\KL(\tilde p|p)\\
B_2(\tilde q) &= \iota_{\{q\}}(\tilde q)
\end{aligned}
\right.
.
\end{align*}
The Fenchel conjugates of $A$, $B_1$ and $B_2$ are given by
\begin{align*}
A^*(w) = \frac1\beta\langle \pi^{\mathrm{ref}}, e^{\beta( w-c)}-1\rangle, && B_1^*(u) = \frac{1}{\alpha}\langle p,e^{\alpha u}-1\rangle, && B^*_2(v) = \langle q, v\rangle.
\end{align*}
Now we have by Fenchel's duality
\begin{align*}
\min_{\pi} A(\pi) + B_1(\pi\ones) + B_2(\pi^\top \ones) &= \min_\pi \max_{u,v} \langle -u, \pi\ones\rangle -B_1^*(-u)+\langle -v, \pi^\top\ones\rangle -B_2^*(-v) +A(\pi)\\
&=\max_{u,v} -B_1^*(-u)-B_2^*(-v)-\max_{\pi} \langle u\ones^\top +\ones v^\top, \pi\rangle -A(\pi)\\
&=\max_{u,v} -B_1^*(-u)-B_2^*(-v)-A^*(u\ones^\top +\ones v^\top).
\end{align*}
This last expression coincides with Eq.~\eqref{eq:proxy-dual} after the change of variables $(u,v)\leftarrow (u/\alpha, v/\alpha)$. Here the interversion of min-max is justified by Fenchel's duality theorem applies~\cite[Thm.~31.2]{rockafellar1970convex} which applies because the dual objective is continuous. The first-order optimality conditions in the min-max problem give the primal-dual relations~\cite[Thm.~31.3]{rockafellar1970convex}. Uniqueness of the dual maximizer follows from the strict concavity of the dual (the one dimensional invariant direction discussed in App.~\ref{app:proofs} for EOT disappears here because of first term of the dual objective is strictly concave in $u$).
\end{proof}

\section{Mitigating the relaxation error}\label{sec:debiasing}
In Thm.~\ref{thm:regularization-path}, the error of the regularization path contained two terms: the entropic error in $O(\beta_t^{-1})$ and the relaxation error in $O(\alpha_t)$. Any technique to mitigate these sources of errors, can be used to improve the convergence speed of Annealed Sinkhorn. In ``geometric'' settings -- such as when the goal is to compute Wasserstein distances between continuous distributions on $\RR^d$ -- some techniques are known to reduce the entropic bias, for instance by substracting ``self-transport'' terms (of the form $\EOT_\beta(p,p)$) as in the definition of the Sinkhorn divergence~\citep{feydy2019interpolating, chizat2020faster}. In this section, we focus on the second source of error, the relaxation bias, and present a technique to mitigate it.

\subsection{Piecewise constant annealing schedule} 
Since the magnitude of the relaxation error is proportional to the change of inverse temperature $\alpha_t=\beta_{t}-\beta_{t-1}$, a first approach to mitigate it is to use a piecewise constant annealing schedule. This gives time to the relaxation error to vanish during each plateau. As observed in Fig.~\ref{fig:stopped} (where the plateau length is chosen proportional to $t$), this approach indeed leads to a reduced relaxation error at the end of each plateau. However, the sequence of peaks of error occurring at each change of temperature does not seem to improve over the rate $\Theta(t^{\kappa-1})$ suggested by our theory. Hence in the long run, this technique seems unlikely to beat the $\Theta(t^{-1/2})$ rate.

\begin{algorithm}
\caption{{\color{red}Debiased} Annealed Sinkhorn}\label{alg:debiased-annealed-sinkhorn}
\begin{enumerate}
\item \textbf{Input}: probability vectors $p\in \Delta_m^*$, $q\in \Delta_n^*$, cost $c\in \RR^{m\times n}$, annealing schedule $(\beta_t)_{t\geq 0}$
\item \textbf{Initialize}: let $b_0=\ones \in \RR^n$ and $K_0=e^{-\beta_0 c}\in \RR^{m\times n}$
\item \textbf{For $t=1,2,\dots$ let}
\begin{align*}
a_{t} &= ({\color{red} a_{t-1}^{1-\frac{\beta_{{(t-2)}\vee 0}}{\beta_{t-1}}}} \odot p) \oslash (K_{t-1} b_{t-1}), && \text{{\color{gray}$\#$ in red, the debiasing term} }\\
K_{t} &= e^{-\beta_{t} c},&&\\
b_{t} &= q\oslash (K_{t}^\top a_{t})\\
\pi_t &= \diag(a_t)K_t\diag(b_t)&& \text{{\color{gray}$\#$ define primal iterate (can be done offline)} }
\end{align*}
\end{enumerate}
\end{algorithm}
 
\subsection{Debiased Annealed Sinkhorn}
Inspired by our theoretical developments, we now develop a modification of Annealed Sinkhorn which effectively reduces the relaxation error in practice. Our reasoning relies on the following result.

\begin{proposition}[Asymptotic debiasing]\label{prop:asymptotic-debiasing}
Assume that $\OT(p,q)$ admits a unique pair of Kantorovich potentials $(u_\infty,v_\infty)$ (up to the usual translation invariance, see App.~\ref{app:proofs}), and let $(u_t)_{t\in \NN^*}$ be any sequence that converges, up to constant shifts, to $u_\infty\in \RR^m$. Consider the modified regularization path $(\tilde \pi_t^\mathrm{reg})$ where, in the objective $F^\reg_t$ (Eq.~\eqref{eq:Fprox}), we replace $p$ by
\begin{align}\label{eq:debiasing}
\tilde p_t \coloneqq Z_t^{-1} e^{\alpha_t u_t}\odot p &&\text{where} && Z_t = \Vert e^{\alpha_t u_t}\odot p\Vert_1.
\end{align}
Then it holds
\begin{align}\label{eq:debiasing-error}
\Vert \tilde \pi^\reg_t\ones-p\Vert_1 = o(\alpha).
\end{align}
Moreover, if $(\beta_t)$ is a concave annealing schedule such that $\beta_t\to \infty$ and $\beta_t-\beta_{t-1}\downarrow 0$, this statement holds with the choice $u_t = \frac{1}{\beta_t}\log(a_t)$ where $(a_t)_{t\in \NN}$ is defined by the Annealed Sinkhorn iterations (Alg.~\ref{alg:annealed-sinkhorn}).
\end{proposition}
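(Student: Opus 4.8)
The plan is to exploit the duality structure of the modified regularization path (Lem.~\ref{lem:duality-proxy}, applied with $p$ replaced by $\tilde p_t$) to obtain an exact formula for its first marginal, and then control the asymptotics of each factor. Since $\tilde p_t$ (hence the whole statement) is invariant under constant shifts of $u_t$, I first fix representatives so that $u_t \to u_\infty$. By Lem.~\ref{lem:duality-proxy} applied to the modified objective, the modified path satisfies $\tilde\pi_t^{\reg}\ones = e^{-\alpha_t \tilde u_t}\odot\tilde p_t = Z_t^{-1}e^{\alpha_t(u_t-\tilde u_t)}\odot p$, where $(\tilde u_t,\tilde v_t)$ is the dual maximizer of the modified problem. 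Setting $w_t \coloneqq \alpha_t(u_t-\tilde u_t) - (\log Z_t)\ones$, we have $\tilde\pi_t^{\reg}\ones = e^{w_t}\odot p$, and since both $\tilde\pi_t^{\reg}\ones$ and $p$ lie in $\Delta^m$ this forces $\langle e^{w_t},p\rangle = 1$; consequently $\Vert\tilde\pi_t^{\reg}\ones-p\Vert_1 = \sum_i p_i\vert e^{w_{t,i}}-1\vert \leq \max_i\vert e^{w_{t,i}}-1\vert \leq 2\Vert w_t\Vert_\infty$ as soon as $\Vert w_t\Vert_\infty\leq 1$. Hence it suffices to show $\Vert w_t\Vert_\infty = o(\alpha_t)$, i.e.~$u_t - \tilde u_t - \alpha_t^{-1}(\log Z_t)\ones \to 0$ in $\ell_\infty$.

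Next I handle the two scalar/vector asymptotics. A second-order Taylor expansion of $Z_t = \sum_i p_i e^{\alpha_t u_{t,i}}$, using $\alpha_t\to 0$ and $u_t\to u_\infty$ bounded, gives $\log Z_t = \alpha_t\langle p,u_t\rangle + O(\alpha_t^2)$, so $\alpha_t^{-1}\log Z_t\to \langle p,u_\infty\rangle$; the same expansion yields $\KL(p|\tilde p_t) = \log Z_t - \alpha_t\langle p,u_t\rangle = O(\alpha_t^2)$, a fact I will reuse. It thus remains to prove that $\tilde u_t \to u_\infty - \langle p,u_\infty\rangle\ones$, after which $w_t = \alpha_t\big(u_t-\tilde u_t - \alpha_t^{-1}(\log Z_t)\ones\big) = \alpha_t\cdot o(1) = o(\alpha_t)$ and the first claim follows.

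Identifying this limit of $\tilde u_t$ is the main obstacle, and it decomposes into four pieces. \emph{(a) A priori bounds:} repeating the computation in the proof of Lem.~\ref{lem:ell1-regpath} for the modified problem, the first-order optimality condition in $\tilde u$ bounds the oscillation of $\tilde u_t$ by $\tfrac{2}{\alpha_t+\beta_t}\Vert\log\tilde p_t\Vert_{\mathrm{osc}} + \tfrac{\beta_t}{\alpha_t+\beta_t}\Vert c\Vert_{\mathrm{osc}}$, which stays bounded (since $\log\tilde p_t$ is bounded), while the simplex identity $\langle e^{-\alpha_t\tilde u_t},\tilde p_t\rangle = 1$ forces $\min_i\tilde u_{t,i}\leq 0\leq\max_i\tilde u_{t,i}$; together, $(\tilde u_t)$ is bounded and has accumulation points. \emph{(b) $\tilde\pi_t^{\reg}$ converges to an OT plan:} because $\tfrac1{\alpha_t}\KL(p|\tilde p_t) = O(\alpha_t)\to 0$, evaluating $F_t^{\reg}(\cdot\,;\tilde p_t)$ at an OT plan $\pi^*\in\Gamma(p,q)$ shows $\min F_t^{\reg}(\cdot\,;\tilde p_t)\to\OT(p,q)$; combining with $\langle c,\tilde\pi_t^{\reg}\rangle\geq\OT(\tilde\pi_t^{\reg}\ones,q)$ and continuity of $\OT$ in the marginals gives $\langle c,\tilde\pi_t^{\reg}\rangle\to\OT(p,q)$ and $\KL(\tilde\pi_t^{\reg}\ones|\tilde p_t) = O(\alpha_t)$, whence (Pinsker, Lem.~\ref{lem:pinsker}, and $\tilde p_t\to p$) $\tilde\pi_t^{\reg}\ones\to p$, so every accumulation point of $\tilde\pi_t^{\reg}$ is an OT plan. \emph{(c) Passing KKT to the limit:} along a subsequence with $\tilde u_t\to\tilde u_\infty$, $\tilde v_t\to\tilde v_\infty$, $\tilde\pi_t^{\reg}\to\pi_\infty$, the primal--dual relation $\tilde\pi_t^{\reg} = e^{\beta_t(\tilde u_t\ones^\top + \ones\tilde v_t^\top - c)}\odot\pi^{\mathrm{ref}}$ with $\beta_t\to\infty$ forces $\tilde u_{\infty,i}+\tilde v_{\infty,j}\leq c_{ij}$ everywhere and $=c_{ij}$ on $\{\pi_{\infty,ij}>0\}$; thus $(\tilde u_\infty,\tilde v_\infty)$ is a pair of Kantorovich potentials, and by the uniqueness hypothesis $\tilde u_\infty = u_\infty + \lambda\ones$ for some $\lambda\in\RR$. \emph{(d) Pinning the shift:} Taylor-expanding $\langle e^{-\alpha_t\tilde u_t},\tilde p_t\rangle = 1$ gives $\langle\tilde p_t,\tilde u_t\rangle = O(\alpha_t)$, hence in the limit $\langle p,\tilde u_\infty\rangle = 0$, so $\lambda = -\langle p,u_\infty\rangle$. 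Since this limit is independent of the subsequence, $\tilde u_t\to u_\infty - \langle p,u_\infty\rangle\ones$, which is exactly what was needed; this completes the first part. Everything outside step (c) and the compactness in (a)--(b) is routine Taylor expansion.

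Finally, for the ``moreover'' part, a concave schedule with $\beta_t\to\infty$ and $\alpha_t\downarrow 0$ satisfies the hypotheses of Thm.~\ref{thm:AS-qualitative}, so every accumulation point of $(\pi_t)$ lies in $\Gamma(p,q)$ and therefore $\pi_t\ones\to p$. By Lem.~\ref{lem:gibbs-plan-optimal}, $\pi_t$ is the minimizer of $\EOT_{\beta_t}(\pi_t\ones,q)$ with dual potentials $(\tfrac1{\beta_t}\log a_t,\tfrac1{\beta_t}\log b_t)$; running the same compactness-plus-KKT argument as in step (a)--(c) above (now driven by $\beta_t\to\infty$ with marginals $(\pi_t\ones,q)\to(p,q)$) shows that, up to constant shifts, $\tfrac1{\beta_t}\log a_t$ converges to the unique Kantorovich potential $u_\infty$. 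Applying the first part with $u_t = \tfrac1{\beta_t}\log a_t$ then yields $\Vert\tilde\pi_t^{\reg}\ones - p\Vert_1 = o(\alpha_t)$.
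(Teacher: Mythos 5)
Your proof is correct and follows the same overall route as the paper: use the primal--dual relation $\tilde\pi_t^\reg\ones = e^{-\alpha_t\tilde u_t^\reg}\odot\tilde p_t$, show that the dual variables $\tilde u_t^\reg$ of the modified regularization path converge (up to shifts) to the Kantorovich potential $u_\infty$, and then observe that since $u_t\to u_\infty$ as well, the cancellation $u_t-\tilde u_t^\reg\to\mathrm{const}$ makes $\tilde\pi_t^\reg\ones - p = o(\alpha_t)$. The main difference is that the paper invokes Lem.~\ref{lem:gibbs-plan-optimal} as a black box to get $\tilde u_t^\reg\to u_\infty$, whereas you re-derive this stability statement from scratch via steps (a)--(c) (a priori bounds from the dual optimality conditions, a variational argument showing $\tilde\pi_t^\reg\ones\to p$ and $\langle c,\tilde\pi_t^\reg\rangle\to\OT(p,q)$, and passing the KKT system to the limit with $\beta_t\to\infty$); this is more self-contained but is essentially a re-proof of the stability lemma rather than a different idea. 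Your treatment is also more careful on one point that the paper glosses over: Lem.~\ref{lem:gibbs-plan-optimal} requires the \emph{actual} first marginal $\tilde\pi_t^\reg\ones$ to converge to $p$, not merely $\tilde p_t\to p$, and your step (b) explicitly supplies this (the paper leaves it implicit); and your step (d) pins down the constant shift $\lambda=-\langle p,u_\infty\rangle$ explicitly, whereas the paper handles the constant by choosing a shifted representative $\hat u_t$ subject to a normalization constraint -- the effect is the same, your bookkeeping is just more explicit.
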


\begin{proof}
Let $(\tilde \pi_t^\reg,\tilde u_t^\reg,\tilde v_t^\reg)$ be the primal-dual solutions of the regularization path  with first marginal $\tilde p_t$ (defined in Eq.~\eqref{eq:debiasing}) instead of $p$. Recall the primal-dual relations (Eq. \eqref{eq:proxy-primal-dual}) satisfied by the regularization path:
\begin{align}\label{eq:proxy-primal-dual}
\tilde \pi_t^\reg\ones = e^{-\alpha_t \tilde u^\reg_t}\odot \tilde p_t&&\text{and}&&\tilde \pi^\reg=e^{\beta_t(\tilde u^\reg_t\ones^\top +\ones (\tilde v^\reg_t)^\top - c)}\odot \pi^{\mathrm{ref}}.
\end{align}
In particular, $(\tilde u^\reg_t,\tilde v^\reg_t)$ are the optimal dual variables for $\EOT_{\beta_t}(\tilde \pi_t^\reg \ones,q)$.  Since $\tilde p_t$ converges to $p$, we have by Lem.~\ref{lem:gibbs-plan-optimal} that $\tilde u^\reg_t$ converges, up to shifts by constants, to  $u_\infty$. 

As a consequence, we have $\inf_{\lambda \in \RR} \Vert u_t-\tilde u^\reg_t +\lambda \Vert_{\infty} = o(1)$. Let us now consider $\hat u_t = u_t+z_t$ where $z_t\in \RR$ is a constant chosen so that $e^{-\alpha_t \hat u_t}\odot p\in \Delta_m$. With this modified sequence $(\hat u_t)$, we can strengthen the convergence as $\Vert \hat u_t-\tilde u^\reg_t \Vert_{\infty} = o(1)$ by using the fact that $p$, $\tilde p_t =e^{-\alpha_t\hat u_t}\odot p$ and $e^{-\alpha_t\tilde u_t^\reg}\odot \tilde p_t$ all belong to $\Delta_m$ for all $t\in \NN^*$. Therefore,
$$
\Vert \tilde \pi^\reg_t\ones-p\Vert_1 = \Vert e^{-\alpha_t \tilde u^\reg_t}\odot \tilde p_t - p\Vert_1 = \Vert e^{-\alpha_t \tilde u^\reg_t+\alpha_t \hat u_t}\odot p - p\Vert_1 = o(\alpha)
$$
which proves Eq.~\eqref{eq:debiasing-error}. 

Under the stated conditions on $(\beta_t)$, we know by Thm.~\ref{thm:AS-qualitative} that $\pi_t\ones \to p$. It follows, again by Lem.~\ref{lem:gibbs-plan-optimal}, that $u_t$ converges to $u_\infty$ up to constant shifts, and thus can be used in the previous argument.
\end{proof}

This result shows that it is possible to get rid of the first-order relaxation error term of the regularization path by replacing $p$ by
$$
\tilde p_t = e^{\alpha_t u_t} \odot p = (a_t)^{\frac{\alpha_t}{\beta_t}}\odot p
$$
where $(a_t)$ is the sequence produced by Alg.~\ref{alg:annealed-sinkhorn}. This computation motivates the definition of Debiased Annealed Sinkhorn in Alg.~\ref{alg:debiased-annealed-sinkhorn}. There, we do not normalize $\tilde p$ to the simplex because this only changes $a_t$ by an irrelevant scalar factor. 

In the hypothetical case where the error in Eq.~\eqref{eq:debiasing-error} is in fact $O(\alpha^2)$, then the suboptimality gap of the  \emph{debiased} regularization path, adapted from Eq.~\eqref{eq:rate-reg-path}, becomes 
$$
\langle c, \bar \pi_t \rangle -\OT(p,q) = O(t^{-\kappa}+t^{2(1-\kappa)})
$$
for polynomial annealing schedules of the form $\beta_t=\beta_0(1+t)^\kappa$, $\kappa\in {]0,1[}$. Here the fastest decay in $O(t^{-2/3})$ is obtained with $\kappa=2/3$. Interestingly, we often observe that this annealing exponent is near optimal for Debiased Annealed Sinkhorn, see Fig.~\ref{fig:exponent}.

Finally, let us insist that Prop.~\ref{prop:asymptotic-debiasing} is only an inspiration for Alg.~\ref{alg:debiased-annealed-sinkhorn} and is far from proving its convergence -- and a fortiori its convergence rate. There are several missing steps towards a complete proof: first Prop.~\ref{prop:asymptotic-debiasing} studies the error of the regularization path, not of the optimization path. Second, and perhaps more importantly, once one adjusts $\tilde p_t$ using the iterates themselves, this affects the subsequent iterates, and these recursive interactions are that are not covered by Prop.~\ref{prop:asymptotic-debiasing}.

\begin{figure}
\begin{subfigure}{0.5\linewidth}
\includegraphics[scale=0.45]{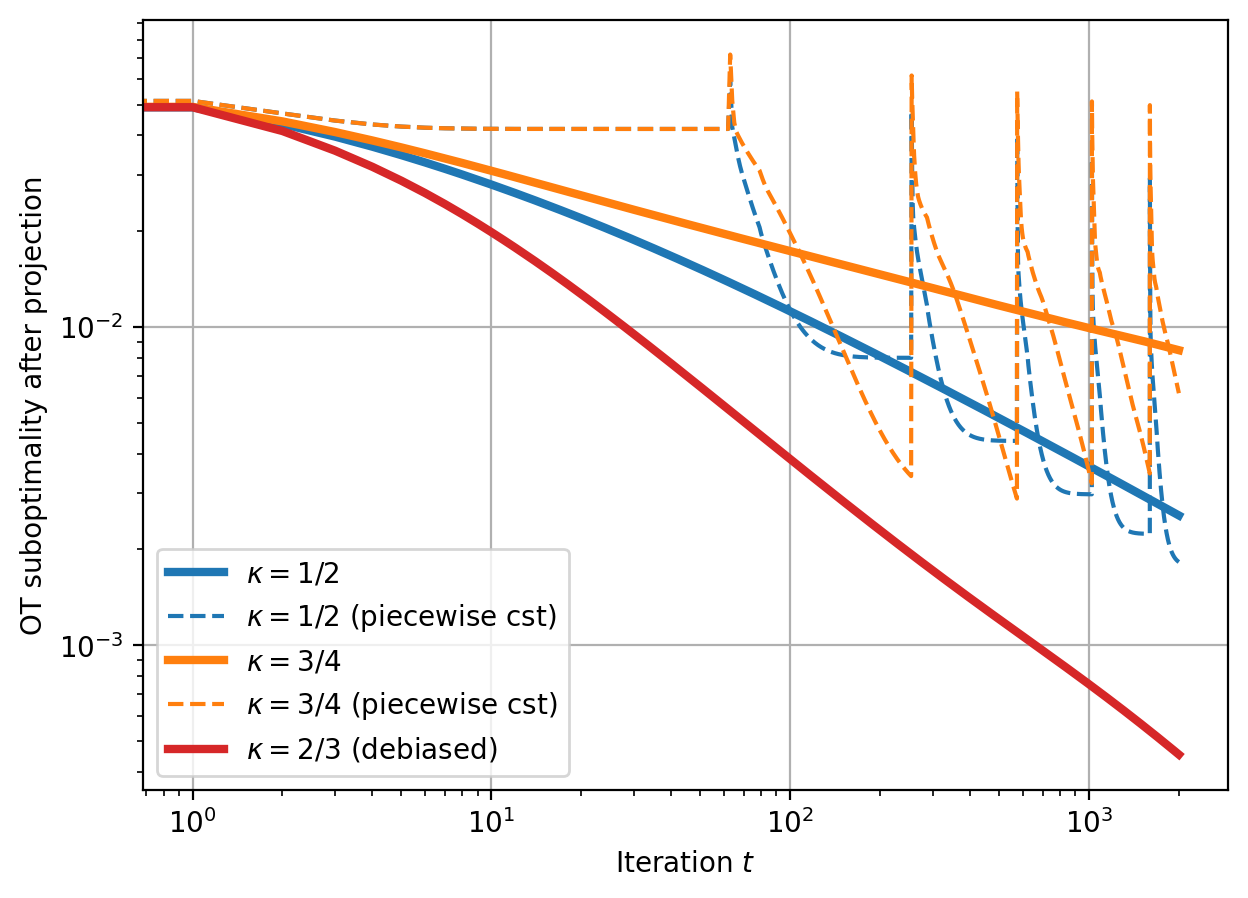}
\caption{Polynomial vs.~piecewise constant schedules}\label{fig:stopped}
\end{subfigure}%
\begin{subfigure}{0.5\linewidth}
\includegraphics[scale=0.45]{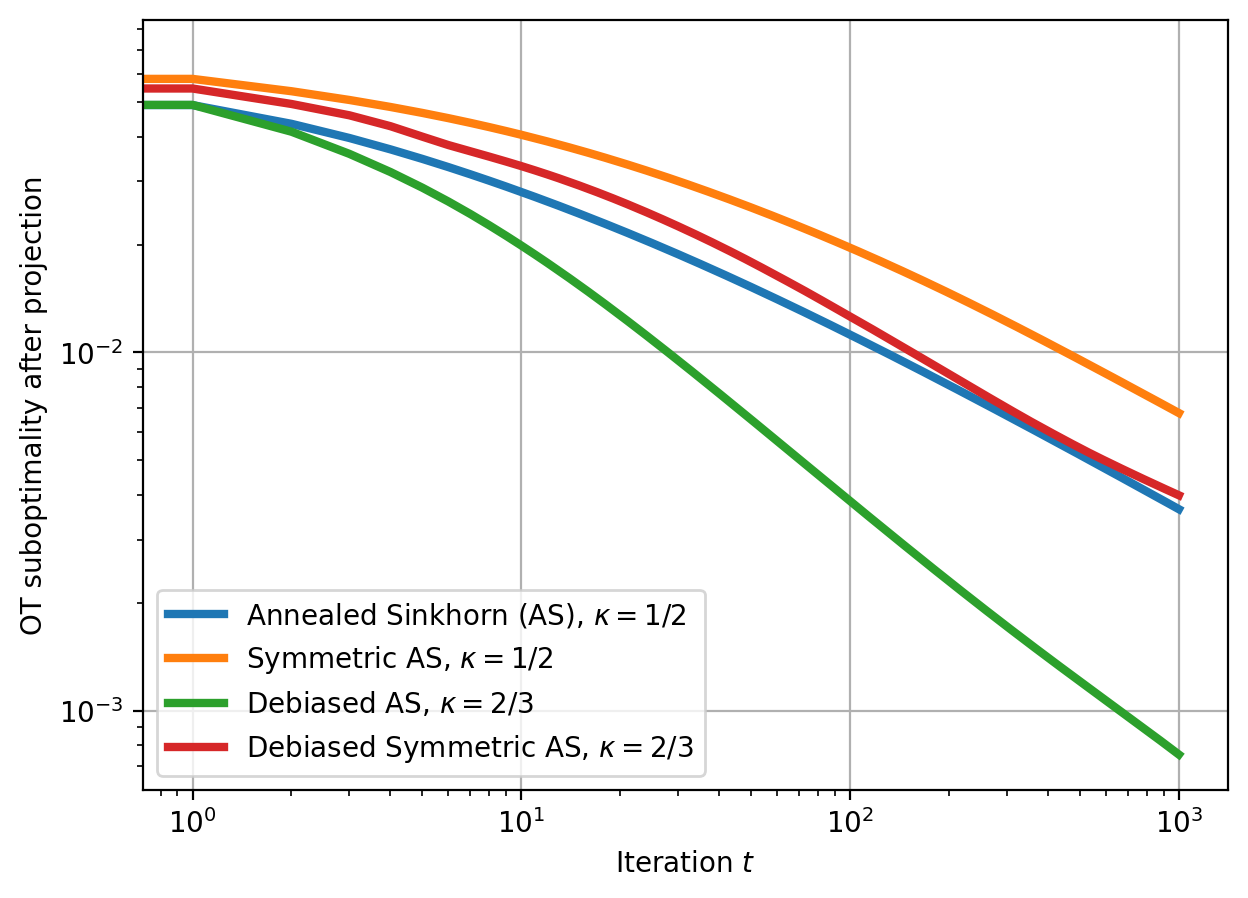}
\caption{Comparison symmetric and debiased vs not}\label{fig:symmetric}
\end{subfigure}
\caption{(left) Behavior of piecewise constant annealing schedules. We take a base schedule $\bar \beta_t =\beta_0(1+t)^\kappa$ and use an actual schedule $\beta_t$ which is updated to the value $\tilde \beta_t$ only for $t=16k^2$, $k\in \NN$ (and is constant otherwise). This standard technique leads to improvements at the end of each plateau, but appears less effective than Debiased Annealed Sinkhorn. (right) Comparison of the symmetric vs.~asymmetric versions of Annealed Sinkhorn and their debiasing. The considered problem has no symmetry, which explains why symmetric Sinkhorn underperforms. Both experiments in the ``geometric cost" setting.}
\end{figure}

\section{The case of Symmetric Sinkhorn}\label{sec:symmetric}
The fact that the output $\pi_t$ of Sinkhorn's algorithm is asymmetrical in $p$ and $q$ after a finite number of iterations -- in the sense that $\pi_t$ satisfies one marginal constraint and not the other -- can be undesirable in certain applications. This has motivated the introduction of a symmetric variant of Sinkhorn's algorithm in~\cite{knight2014symmetry}, also advocated for in~\cite{feydy2019interpolating} to solve symmetric problems of the form $\EOT_{\beta}(p,p)$.  In this section, we briefly explain how to adapt our theory and the debiasing method to this case.

The symmetric version of Annealed Sinkhorn is presented in Alg.~\ref{alg:symmetric-annealed-sinkhorn}. Let us first discuss the version without bias correction, obtained by setting the quantities in red to $0$ in Alg.~\ref{alg:symmetric-annealed-sinkhorn}. Following computations similar to those in Lem.~\ref{lem:OMD-sinkhorn}, this algorithm can be interpreted as online mirror descent (OMD) with step-size ${\color{red} \frac12}$ on the sequence of objectives
\begin{align*}
F_t(\pi) = \alpha_t\langle c, \pi\rangle +\KL(\pi\ones|p)+\KL(\pi^\top \ones |q) &&\text{where}&& \alpha_t = {\color{red} 2} (\beta_t-\beta_{t-1}).
\end{align*}
Indeed, it is not difficult to show that, defining $\pi_t = \diag(a_t)e^{-\beta_t c}\diag(b_t)$, it holds
$$
\pi_{t+1} = \argmin_{\pi} \alpha_t \langle c, \pi\rangle +\langle \log((\pi_t\ones)\oslash p)\ones ^\top, \pi\rangle +\langle \ones  \log((\pi_t\ones)\oslash p)^\top, \pi\rangle + {\color{red} 2}\KL(\pi|\pi_t) 
$$
Here the step-size for which the OMD guarantees apply is $\frac12$ because $F_t$ is $2$-relatively smooth, as the sum of \emph{two} functions which are $1$-relatively smooth (Lem.~\ref{lem:relative-smoothness}), and a linear term. As a consequence of this OMD interpretation, the theory developed in this paper applies to Alg.~\ref{alg:symmetric-annealed-sinkhorn} \emph{mutatis mutandis}. In particular:
\begin{itemize}
\item For positive, nondecreasing and concave annealing schedules $(\beta_t)$, the iterates $(\pi_t)$ of Alg.~\ref{alg:symmetric-annealed-sinkhorn} converge to a solution of $\OT(p,q)$ if and only if $\beta_t\to \infty$ and $\beta_{t}-\beta_{t-1}\to 0$;
\item The regularization path for this algorithm is now defined as the solutions of symmetric relaxed entropic OT problems:
\begin{align}\label{eq:Fprox-sym}
\min_{\pi \in \Delta_{m\times n}} \langle c, \pi\rangle + \frac{1}{\alpha_t}\KL(\pi\ones|p)+ \frac{1}{\alpha_t}\KL(\pi^\top\ones|q) +\frac{1}{\beta_t}\KL(\pi|\pi^{\mathrm{ref}}).
\end{align}
Interestingly, such problems are encountered in the theory of unbalanced OT~\citep{liero2018optimal,frogner2015learning, chizat2018scaling} and have shown to favorably compare to OT in certain applications. Thus, one potential interest of Alg.~\ref{alg:symmetric-annealed-sinkhorn} is that it approximately computes a whole regularization path of unbalanced OT problems. 
\item Finally, one can apply our debiasing method analogously, with the difference that we need now to correct both scaling factors, symmetrically. The formula in red in Alg.~\ref{alg:symmetric-annealed-sinkhorn} is obtained by replacing, at iteration $t$, $(p,q)$ by $((a_{t-1})^{\frac{\alpha_{t-1}}{\beta_{t-1}}}\odot p, (b_{t-1})^{\frac{\alpha_{t-1}}{\beta_{t-1}}})\odot q)$ in the symmetric iterates, a modification motivated in Section~\ref{sec:debiasing}. We note that in this case, it is important to project the iterate $\pi_t$ on the simplex via the normalization $\pi_t \leftarrow \pi_t/\Vert \pi_t\Vert_1$ because the total mass is not automatically preserved by the recursion.
\end{itemize}
We compare on Fig.~\ref{fig:symmetric} the performance of symmetric Annealed Sinkhorn to standard Annealed Sinkhorn on a problem without symmetry. We observe that the symmetric version is slower, which may stem from the fact that it requires a step-size $1/2$ instead of $1$. Moreover, the debiasing modification does not work as well as in the non-symmetric case, although it still brings a non-trivial improvement (we do not have a explanation for this fact). 

\begin{algorithm}
\caption{{\color{red}Debiased} Annealed Symmetric Sinkhorn (for the non-debiased version, set the exponents in red to $0$).}\label{alg:symmetric-annealed-sinkhorn}
\begin{enumerate}
\item \textbf{Input}: probability vectors $p\in \Delta_m^*$, $q\in \Delta_n^*$, cost $c\in \RR^{m\times n}$, annealing schedule $(\beta_t)_{t\geq 0}$
\item \textbf{Initialize}: let $a_0=\ones \in \RR^m$, $b_0=\ones \in \RR^n$ and $K_0=e^{-\beta_0 c}\in \RR^{m\times n}$
\item \textbf{For $t=1,2,\dots$ let}
\begin{align*}
a_{t} &= a_{t-1}^{\frac12 {\color{red} +\big(1-\frac{\beta_{(t-2)\vee 0}}{\beta_{t-1}}\big)}} \odot (p \oslash (K_{t-1} b_{t-1}))^{\frac12}, \\
b_{t} &= b_{t-1}^{\frac12 {\color{red} +\big(1-\frac{\beta_{(t-2)\vee 0}}{\beta_{t-1}}\big)}} \odot (q \oslash (K_{t-1}^\top a_{t-1}))^{\frac12},\\
K_{t} &= e^{-\beta_{t} c},&& \text{{\color{gray}$\#$ update inverse temperature} }\\
\pi_t & = \mathrm{normalize}(\diag(a_t)K_t\diag(b_t)) &&\text{{\color{gray}$\#$ project on the simplex (can be done offline)} }
\end{align*}
\end{enumerate}
\end{algorithm}

\section{Conclusion}
In this paper, we have presented the first guarantees for Annealed Sinkhorn with practical annealing schedules. Our analysis of the regularization path reveals that in its standard formulation, this algorithm converges at best at the unavoidably slow rate $O(t^{-1/2})$ to solve OT problems. This limitation can be mitigated by a debiasing strategy which enables faster annealing schedules. In numerical experiments, we observed that a single run of our Debiased Annealed Sinkhorn's algorithm spans the whole speed-accuracy Pareto front of the standard Sinkhorn's algorithm.

A widespread justification for using Annealed Sinkhorn is that it is able to take advantage of multiscale structures in geometric contexts~\cite[Chap.~3]{feydy2020geometric}. It would be interesting to develop tools to study this phenomenon, which is not captured by our approach.
Our analysis also shows that Annealed Sinkhorn and unbalanced OT are intrinsically related, a connection which will be further exploited in a companion paper.

\bibliography{LC.bib}

\appendix

\section{Experimental settings}\label{app:experiments}

The code in Julia Language~\citep{Julia-2017} to reproduce the numerical experiments can be found in this repository\footnote{\url{https://github.com/lchizat/annealed-sinkhorn}}. We consider two types of optimal transport problems:
\begin{itemize}
\item \textbf{Random cost}: we consider $m=n=100$ and a random cost matrix $c_{i,j}\overset{i.i.d.}{\sim}\Nn(0,1)$ and random weights $p_i,q_j\overset{i.i.d.}{\sim} \Uu([0,1])$ which are then normalized to unit mass.
\item \textbf{Geometric cost}: we consider two points clouds $(x_i)_{i=1}^{300}$ and $(y_j)_{j=1}^{300}$ in $\RR^2$ distributed as shown on Fig.~\ref{fig:geometric-cost}. The cost matrix is $c_{i,j}=\Vert x_i-y_j\Vert_2^2$ and $p=m^{-1}\ones_m$ and $q=n^{-1}\ones_n$.
\end{itemize}
All plots are computed with the ``geometric cost" setting, to the exception of Fig.~\ref{fig:unstructured}. In all cases, we normalize the random cost matrix to unit oscillation norm $\Vert c\Vert_{\mathrm{osc}}$. All annealing schedules are of the form $\beta_t=\beta_0(1+t)^{\kappa}$ with $\beta_0=10$. We have observed that Annealed Sinkhorn's behavior is quite sensitive to the choice of $\beta_0=10$ (in particular given the slow schedules considered here), but we refrained from changing $\beta_0$ between experiments for the sake of fair comparison.

To compute the regularization path in Fig.~\ref{fig:optim-vs-proxy}, we used the generalized Sinkhorn iterations from~\cite{chizat2018scaling}, using $100$ iterations for the first point $\pi^\reg_1$ and then $50$ iterations for the following ones, using $\pi^\reg_{t-1}$ as a warm start to compute $\pi^\reg_t$. Finally, for the piecewise constant annealing schedules shown in Fig.~\ref{fig:stopped}, we used the same form of base schedules $\bar \beta_t=\beta_0(1+t)^{\kappa}$ but where we only update $\beta_t=\bar \beta_t$ when $t=16k^2$, $k\in \NN$ which leads to plateaus of increasing lengths $16, 64, 144, 256,$ etc.

\begin{figure}\caption{Configuration of the data points used in the ``geometric setting''.}\label{fig:geometric-cost}
\centering
\includegraphics[scale=0.4]{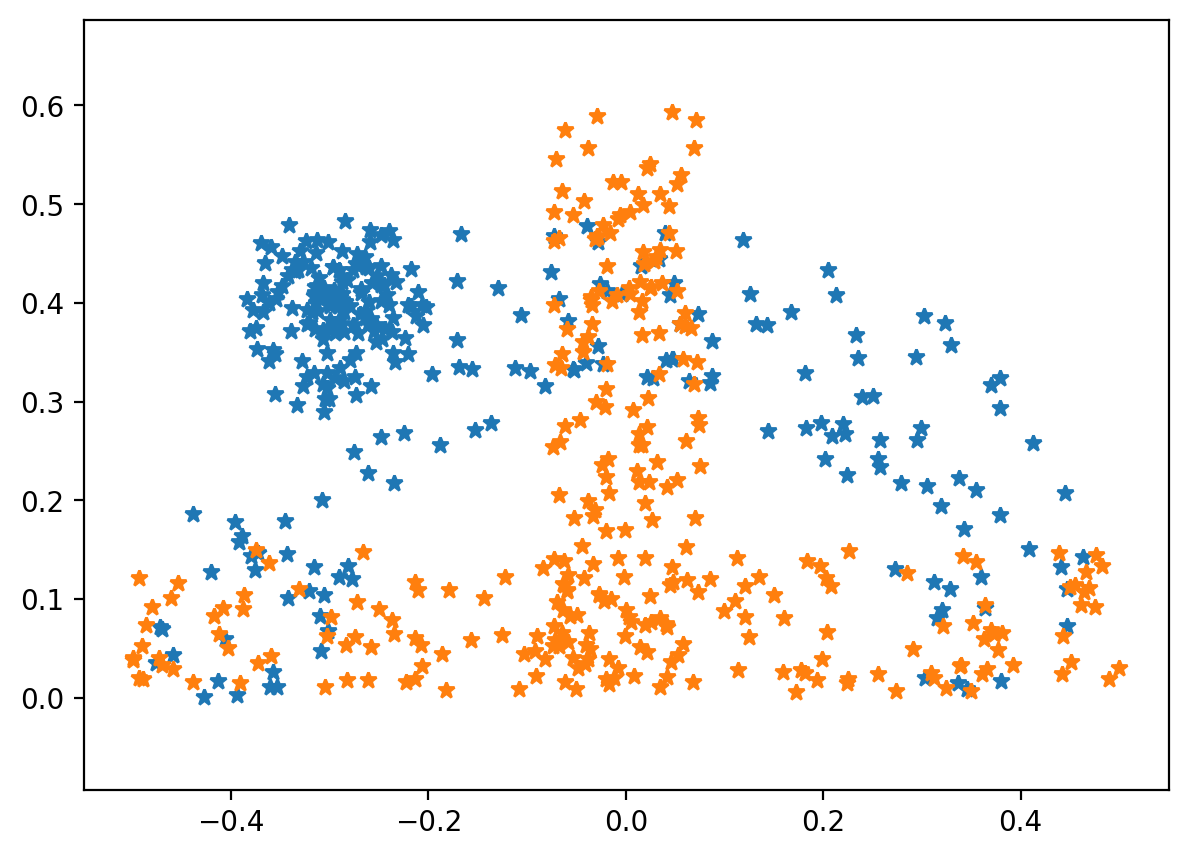}
\end{figure}

\section{Background on Entropic OT}\label{app:proofs}

In this appendix, we recall some standard facts about entropic OT. The entropic OT problem $\EOT_{\beta_t}(p_t,q_t)$ admits a dual formulation that can be derived using Lagrange multipliers as follows,
\begin{multline*}
\min_{\pi \in \Gamma(p_t,q_t)} \langle c, \pi\rangle +\beta_t^{-1} \KL(\pi|\pi^\mathrm{ref}) \\
= \min_{\pi \in \RR^{m\times n}}  \max_{u\in \RR^n, v\in \RR^m}\langle c, \pi\rangle +\beta_t^{-1} \KL(\pi|\pi^\mathrm{ref}) +\langle p_t-\pi\ones, u\rangle + \langle q_t-\pi^\top \ones, v\rangle\\
=\max_{u\in \RR^n, v\in \RR^m}  \langle p_t, u\rangle + \langle q_t , v\rangle - \beta_t^{-1} \max_{\pi \in \RR^{m\times n}}  \langle \beta_t (u\ones^\top+ \ones v^\top - c), \pi\rangle -  \KL(\pi|\pi^\mathrm{ref})
\\
=\max_{u\in \RR^n, v\in \RR^m} \langle p_t , u\rangle + \langle q_t, v\rangle +\beta_t^{-1}\Big( 1- \langle \pi^\mathrm{ref}, e^{\beta_t(u\ones^\top +\ones v^\top-c)}\rangle\Big)
\end{multline*}
where we have used the expression for the convex conjugate of $\KL$ and where the min-max interversion can be justified by Fenchel's duality theorem, see e.g.~\cite[Sec.~3.3]{merigot2021optimal}. This dual objective is constant on all $1$-dimensional affine sets of the form $\Vv(u,v) = \{(u+\lambda \ones, v-\lambda \ones)\;:\; \lambda \in \RR\}$ for any $(u,v)\in \RR^n\times \RR^m$, and is strongly concave over the orthogonal spaces $\Vv(u,v)^\perp$. The dual problem admits maximizers, which are unique up to this invariance. To account with this structure, it is convenient to consider the semi-norm $\Vert (u,v)\Vert_{o,\infty} =\inf_{\lambda\in \RR} \Vert u -\lambda\Vert_\infty+\Vert v+\lambda\Vert_\infty$. Any maximizing dual pair satisfies the first order optimality:
\begin{align}\label{eq:schrodinger-system}
\left\{
\begin{aligned}
u[i] &= \beta^{-1}_t \log(mn p_t[i]) - \beta_t^{-1}\log \sum_{j} e^{\beta_t(v[j]-c_{i,j})},\quad \forall i\in \{1,\dots,m\}\\
v[j] &= \beta^{-1}_t \log(mn q_t[j]) - \beta_t^{-1}\log \sum_{i} e^{\beta_t(u[i]-c_{i,j})},\quad \forall j\in \{1,\dots,n\}.
\end{aligned}
\right.
.
\end{align}
These equation imply in particular that $\Vert u\Vert_\mathrm{osc}\leq \Vert c\Vert_{\mathrm{osc}}+\beta_t^{-1}\Vert \log(p_t)\Vert_\mathrm{osc}$ and similarly for $v$.

\begin{lemma}[Duality and stability for EOT]\label{lem:gibbs-plan-optimal}
For any $\beta_t>0$, $a_t\in (\RR_+^*)^m$, and $b_t\in (\RR_+^*)^n$, let  $\pi_t = \diag(a_t)e^{-\beta_t c}\diag(b_t)$  and consider the marginals $p_t=\pi_t\ones\in \Delta_m^*$ and $q_t=\pi_t^\top \ones\in \Delta_n^*$. Then $\pi_t$ is the unique minimizer for $\EOT_{\beta_t}(p_t,q_t)$, and $(u_t,v_t)\coloneqq (\log(a_t/m),\log(b_t/n))$ is a maximizer of the dual EOT problem. 

Considering now sequences indexed by $t$, if we moreover have $\beta_t\to \infty$, $p_t\to p$ and $q_t\to q$ then $\EOT_{\beta_t}(p_t,q_t)\to \OT(p,q)$ and, up to taking a subsequence, $\pi_t$ converges to an optimal transport plan $\pi^*$ and $(u_t,v_t)$ converge in the norm $\Vert \cdot \Vert_{o,\infty}$ to Kantorovich potentials, i.e.~solutions to the dual of $\OT(p,q)$.
\end{lemma}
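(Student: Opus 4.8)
The plan is to handle the two assertions of the lemma separately: at fixed $\beta_t$ the claim is an exact convex-duality statement, which I would prove by a KKT verification, whereas the $\beta_t\to\infty$ statement is a compactness-plus-$\Gamma$-convergence argument. For the first part, I would take the dual of $\EOT_{\beta_t}(p_t,q_t)$ exactly as derived at the start of this appendix and check that the triple $(\pi_t,u_t,v_t)$ satisfies the associated (sufficient) optimality system: stationarity in $\pi$ is the identity $\pi_t=e^{\beta_t(u_t\ones^\top+\ones v_t^\top-c)}\odot\pi^{\mathrm{ref}}$, which holds by construction of $\pi_t=\diag(a_t)e^{-\beta_t c}\diag(b_t)$ once $(u_t,v_t)$ is read off from $(\log a_t,\log b_t)$ up to the affine shift absorbed into the convention $\pi^{\mathrm{ref}}=(mn)^{-1}\ones\ones^\top$; primal feasibility $\pi_t\ones=p_t$ and $\pi_t^\top\ones=q_t$ holds by the very definition of $p_t,q_t$; and there are no inequality multipliers because the exponential form keeps $\pi_t$ strictly positive. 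Since the primal problem is convex and these conditions are sufficient, $\pi_t$ minimizes $\EOT_{\beta_t}(p_t,q_t)$ and $(u_t,v_t)$ solves its dual; uniqueness of $\pi_t$ then follows from strict convexity of $\pi\mapsto\KL(\pi|\pi^{\mathrm{ref}})$ on the nonempty compact polytope $\Gamma(p_t,q_t)$ (nonempty because $p_t,q_t$ have equal total mass $1$).

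For the second part, the quantitative input is the sandwich
\begin{align*}
\OT(p_t,q_t)\ \le\ \EOT_{\beta_t}(p_t,q_t)\ \le\ \OT(p_t,q_t)+\beta_t^{-1}\log(mn),
\end{align*}
obtained by dropping the entropy for the lower bound and by inserting an $\OT(p_t,q_t)$-optimal plan into $\EOT_{\beta_t}$ for the upper bound, using $0\le\KL(\cdot|\pi^{\mathrm{ref}})\le\log(mn)$. Together with $\beta_t\to\infty$ and the continuity of $(p,q)\mapsto\OT(p,q)$ (which I would get from Kantorovich duality, writing $\OT$ as a maximum of a compact family of affine functions of $(p,q)$ after fixing the translation gauge), this gives $\EOT_{\beta_t}(p_t,q_t)\to\OT(p,q)$.

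Next, the subsequential limits. The plans $(\pi_t)$ are nonnegative with row sums $p_t\to p$, hence lie in a compact set; along a subsequence $\pi_t\to\pi^*$ with $\pi^*\ones=p$ and $(\pi^*)^\top\ones=q$, so $\pi^*\in\Gamma(p,q)$, and since by the first part $\langle c,\pi_t\rangle=\EOT_{\beta_t}(p_t,q_t)-\beta_t^{-1}\KL(\pi_t|\pi^{\mathrm{ref}})\to\OT(p,q)$, we get $\langle c,\pi^*\rangle=\OT(p,q)$, i.e.\ $\pi^*$ is an optimal transport plan. For the potentials I would use the a priori bound $\Vert u_t\Vert_\mathrm{osc}\le\Vert c\Vert_\mathrm{osc}+\beta_t^{-1}\Vert\log p_t\Vert_\mathrm{osc}$ (and its analogue for $v_t$) recorded after the Schr\"odinger system; since $p_t\to p\in\Delta_m^*$ and $q_t\to q\in\Delta_n^*$, these oscillations are bounded for $t$ large. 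Fixing the translation gauge makes $u_t$ bounded, and then $v_t$ bounded via $q_{t,j}=\sum_i\pi^{\mathrm{ref}}_{ij}e^{\beta_t(u_{t,i}+v_{t,j}-c_{ij})}$ together with the uniform lower bound on $q_t$, so along a further subsequence $(u_t,v_t)\to(u_\infty,v_\infty)$. Passing to the limit in $(\pi_t)_{ij}\le 1$, i.e.\ in $u_{t,i}+v_{t,j}-c_{ij}\le\beta_t^{-1}\log(mn)$, yields feasibility $u_{\infty,i}+v_{\infty,j}\le c_{ij}$; and evaluating the EOT dual objective at its maximizer collapses its exponential term to the total mass of $\pi_t$, namely $1$, so $\EOT_{\beta_t}(p_t,q_t)=\langle p_t,u_t\rangle+\langle q_t,v_t\rangle\to\langle p,u_\infty\rangle+\langle q,v_\infty\rangle$, whence $\langle p,u_\infty\rangle+\langle q,v_\infty\rangle=\OT(p,q)$. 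Feasibility and this value identity together make $(u_\infty,v_\infty)$ an optimal Kantorovich pair, and convergence in $\Vert\cdot\Vert_{o,\infty}$ is exactly convergence of this gauge-fixed representative.

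The main obstacle I anticipate is the dual-variable bookkeeping in the second part: one needs genuine compactness of $(u_t,v_t)$, not just of their oscillations, which forces a careful combination of the a priori $\mathrm{osc}$-bound, a gauge choice, and the uniform positivity of $p_t,q_t$, with the extra caveat that these a priori bounds only take effect once $p_t,q_t$ are close to their full-support limits. By comparison, the fixed-$\beta_t$ part is a routine duality check; the only mild care there is pinning down the exact normalization relating $(u_t,v_t)$ to $(\log a_t,\log b_t)$ under the convention $\pi^{\mathrm{ref}}=(mn)^{-1}\ones\ones^\top$.
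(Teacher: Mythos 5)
Your proof is correct, and while the first (fixed-$\beta_t$) part is the same KKT/Fenchel verification as in the paper, your stability argument goes by a genuinely different route. The paper establishes both $\EOT_{\beta_t}(p_t,q_t)\to\OT(p,q)$ and the optimality of the limit plan $\pi_\infty$ in one stroke by invoking the Altschuler rounding lemma (Lem.~\ref{lem:altschuler}): it bounds $\langle c,\pi_\infty-\pi^*\rangle$ through $\langle c,\pi_\infty-\bar\pi_t\rangle+\langle c,\bar\pi_t-\pi^*\rangle$ where $\bar\pi_t=\proj_{\Gamma(p,q)}(\pi_t)$. You instead use the more elementary sandwich $\OT(p_t,q_t)\le\EOT_{\beta_t}(p_t,q_t)\le\OT(p_t,q_t)+\beta_t^{-1}\log(mn)$ plus continuity of $\OT$, and then get optimality of $\pi^*$ from $\langle c,\pi_t\rangle=\EOT_{\beta_t}(p_t,q_t)-\beta_t^{-1}\KL(\pi_t|\pi^{\mathrm{ref}})\to\OT(p,q)$; this avoids the rounding lemma entirely, at the modest cost of needing continuity of $(p,q)\mapsto\OT(p,q)$. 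For the potentials, both proofs fix the translation gauge and use the Schr\"odinger-system oscillation bounds (noting, as you rightly do, that these only bite for $t$ large since $p_t,q_t$ must be uniformly bounded away from the boundary), but you certify optimality of $(u_\infty,v_\infty)$ by feasibility plus \emph{value attainment} (the dual EOT objective collapses to $\langle p_t,u_t\rangle+\langle q_t,v_t\rangle$ because the exponential term integrates to the mass of $\pi_t$, which is $1$), whereas the paper certifies it by feasibility plus \emph{complementary slackness} on the support of $\pi_\infty$. Both certificates are standard and correct; your value-identity route is arguably a touch cleaner since it sidesteps having to argue about the support of the limit plan.
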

\begin{proof} 
Consider the problem $\EOT_{\beta_t}(p_t,q_t)$. Fenchel's duality theorem tells us that $(\pi_t,(u_t,v_t))$ is a primal-dual pair of solutions if and only if the optimality condition 
$$
\pi_t=e^{\beta_t(u_t\ones^\top +\ones v_t^\top -c)}\odot \pi^\mathrm{ref} = (mn)^{-1} e^{\beta_t(u_t\ones^\top +\ones v_t^\top -c)}
$$
is satisfied. By their definitions, $\pi_t=\diag(a_t)e^{-\beta_t c}\diag(b_t)$ and $(u_t,v_t)\coloneqq (\log(a_t/m),\log(b_t/n))$ satisfy this optimality condition. This shows the first part of the statement. 

Let us now prove the stability result. Let $\pi_\infty$ be a limit point of $(\pi_t)$, which must exist since the simplex is compact. We have clearly $\pi_\infty \in \Gamma(p,q)$ and by Lem.~\ref{lem:altschuler} we have for $t\in \NN^*$,
\begin{align*}
\langle c,\pi_\infty-\pi^*\rangle &=\langle c,\pi_\infty-\bar \pi_t +\bar \pi_t-\pi^*\rangle\\
&\leq \Vert c\Vert_\mathrm{osc}\cdot \Vert \pi_\infty-\bar \pi_t\Vert_1 +\beta_t^{-1}\log(mn)+4\big( \Vert p_t-p\Vert_1+\Vert q_t-q\Vert_1\big)
\end{align*}
where $\bar \pi_t$ is the output of Alg.~\ref{alg:projection} applied to $\pi_t$. Since the right-hand side converges to $0$ as $t\to \infty$, this shows that $\pi_\infty$ is an OT plan. In addition, since $\beta_t^{-1} \KL(\pi_t|\pi^{\mathrm{ref}})\leq \beta_t^{-1} \log(mn)\to 0$, we also have $\EOT_{\beta_t}(p_t,q_t)\to \OT(p,q)$. 

It remains to study the convergence of dual variables. 
Using the invariance of the dual problem, we can arbitrarily fix, say, $u_t[1]=0$ and then by Eq.~\eqref{eq:schrodinger-system} the sequence $(u_t,v_t)$ is bounded and converges, up to a subsequence, to a limit $(u_\infty, v_\infty)$. It remains to show that this is a solution to the dual of $\OT(p,q)$. Since $\pi_t=e^{\beta_t(u_t\ones^\top +\ones v_t^\top -c)}$ converges to an OT plan and $\beta_t\to \infty$, it necessarily holds $u_\infty\ones^\top +\ones v_\infty^\top -c\leq 0$. Moreover, for all $i,j$ such that $\pi_\infty[i,j]>0$, it must hold $u_\infty[i] +v_\infty[j] -c[i,j]=0$. These two properties are sufficient optimality conditions for the dual OT problem~\cite[Sec.~2.5]{peyre2019computational}, and this shows the result. 
\end{proof}

\section{Online Mirror descent}\label{sec:MD}
Let $Q$ be a convex closed subset of $\RR^d$ and $h:Q\to \RR$ a differentiable convex function -- the so-called distance-generating function -- and let 
$$
D_h(x|y)\coloneqq h(x)-h(y)-\langle \nabla h(y),x-y\rangle,\quad \forall x,y \in Q
$$
be the Bregman divergence associated to $h$. A differentiable function $f:Q\to \RR$ is said to be $L$-relatively smooth with respect to $h$ if and only if the function $Lh-f$ is convex, or equivalently if $D_f(x|y)\leq L D_h(x|y)$ for all $x,y\in Q$.

Consider a sequence of functions $f^{(t)}=g_1^{(t)}+g_2^{(t)}$ where $g_1^{(t)}, g_2^{(t)}$ are convex and lower-semicontinuous and moreover $g_1^{(t)}$ are $L$-relatively smooth wrt to $h$. 

\begin{algorithm}
\caption{Online Mirror Descent}\label{alg:onlineMD}
\begin{enumerate}
\item \textbf{Initialize}: $x_0\in Q$, step-size $\eta>0$
\item \textbf{For $t=1,2,\dots$}
$$x_t = \argmin_{x\in Q} \;g_1^{(t)}(x_{t-1})+\langle \nabla g_1^{(t)}(x_{t-1}),x-x_{t-1}\rangle + g_2^{(t)}(x)+\frac{1}{\eta} D_h(x|x_t)$$
\end{enumerate}
\end{algorithm}

This algorithm enjoys the following guarantees~\cite{}.
\begin{proposition}\label{prop:OMD}
With a step-size $\eta= \frac1L$, online mirror descent (Algorithm~\ref{alg:onlineMD}) enjoys the following guarantees. It holds for all $t\geq 1$ that $f^{(t)}(x_t)\leq f^{(t)}(x_{t-1})$ and for all $T\geq 1$ and $x\in Q$,
\begin{equation}\label{eq:onlineMD}
\frac1T \sum_{t=1}^T (f^{(t)}(x_t)- f^{(t)}(x)\big) \leq \frac{LD_h(x|x_0)}{T}.
\end{equation}
\end{proposition}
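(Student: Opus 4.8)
The plan is to run the textbook relative-smoothness analysis of mirror descent, one step at a time, and then telescope. Fix $t$ and write $\ell_t(x) \coloneqq g_1^{(t)}(x_{t-1}) + \langle \nabla g_1^{(t)}(x_{t-1}), x - x_{t-1}\rangle + g_2^{(t)}(x)$ for the convex surrogate, so that $x_t = \argmin_{x\in Q} \ell_t(x) + \tfrac1\eta D_h(x|x_{t-1})$ (I would first silently correct the evident typo in Alg.~\ref{alg:onlineMD}, where $D_h(x|x_t)$ must read $D_h(x|x_{t-1})$). Writing the first-order optimality condition for this constrained convex problem — there exist $s\in\partial g_2^{(t)}(x_t)$ and a normal-cone element with $\langle \nabla g_1^{(t)}(x_{t-1}) + s + \tfrac1\eta(\nabla h(x_t)-\nabla h(x_{t-1})),\, x - x_t\rangle \ge 0$ for all $x\in Q$ — and combining it with the subgradient inequality $g_2^{(t)}(x)\ge g_2^{(t)}(x_t) + \langle s, x-x_t\rangle$ and the Bregman three-point identity $D_h(x|x_{t-1}) = D_h(x|x_t) + D_h(x_t|x_{t-1}) + \langle \nabla h(x_t)-\nabla h(x_{t-1}), x-x_t\rangle$, I obtain the key one-step inequality
$$
\ell_t(x_t) + \tfrac1\eta D_h(x_t|x_{t-1}) \ \le\ \ell_t(x) + \tfrac1\eta D_h(x|x_{t-1}) - \tfrac1\eta D_h(x|x_t), \qquad \forall x\in Q.
$$

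Next I would bring in the hypotheses on $f^{(t)} = g_1^{(t)} + g_2^{(t)}$: $L$-relative smoothness of $g_1^{(t)}$ gives $f^{(t)}(x_t) \le \ell_t(x_t) + L\,D_h(x_t|x_{t-1})$, while convexity of $g_1^{(t)}$ gives $\ell_t(x) \le f^{(t)}(x)$ for all $x$. Setting $\eta = 1/L$ in the key inequality and chaining these two bounds yields, for every $x\in Q$, the contraction $f^{(t)}(x_t) - f^{(t)}(x) \le L\big(D_h(x|x_{t-1}) - D_h(x|x_t)\big)$. Summing over $t=1,\dots,T$ telescopes the right-hand side to $L\big(D_h(x|x_0)-D_h(x|x_T)\big) \le L\,D_h(x|x_0)$ since $D_h\ge 0$, and dividing by $T$ gives Eq.~\eqref{eq:onlineMD}. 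For the monotonicity claim $f^{(t)}(x_t)\le f^{(t)}(x_{t-1})$ I would instead take $x = x_{t-1}$ in the key inequality, use $\ell_t(x_{t-1}) = f^{(t)}(x_{t-1})$ and $D_h(x_{t-1}|x_{t-1})=0$, and combine with $f^{(t)}(x_t)\le \ell_t(x_t) + L\,D_h(x_t|x_{t-1})$ and $\eta=1/L$ to get $f^{(t)}(x_t) \le f^{(t)}(x_{t-1}) - \tfrac1\eta D_h(x_{t-1}|x_t) \le f^{(t)}(x_{t-1})$.

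The only genuinely delicate point is making the optimality condition and the three-point identity rigorous when $h$ is merely differentiable and convex: one needs $\nabla h(x_t)$ to be well defined and the standard calculus of subdifferentials (sum rule, normal cone) to apply, which holds under the usual Legendre/essential-smoothness assumptions on $h$ — or, in the concrete instances used in this paper, because $h$ is the negative entropy and the Sinkhorn-type iterates stay in the relative interior of the simplex, so $\nabla h$ is smooth there. Everything beyond that is bookkeeping, and the argument is exactly the relative-smooth mirror-descent template adapted to the composite, time-varying (online) setting.
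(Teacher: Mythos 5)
Your proof is correct and follows essentially the same route as the paper: the paper cites the one-step inequality $f^{(t)}(x_t)-f^{(t)}(x)\le L\big(D_h(x|x_{t-1})-D_h(x|x_t)\big)$ from \citet[Eq.~(28)]{lu2018relatively}, telescopes it, and takes $x=x_{t-1}$ for monotonicity, whereas you re-derive that same inequality from scratch (optimality condition, three-point identity, relative smoothness plus convexity of the surrogate) before doing the identical telescoping and $x=x_{t-1}$ step. You also correctly spotted the typo in Alg.~\ref{alg:onlineMD}, where $D_h(x|x_t)$ should read $D_h(x|x_{t-1})$.
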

\begin{proof}
This is an immediate adaptation of~\cite[Thm~3.1]{lu2018relatively} -- which deals with the non-composite minimization case -- so we only explain how to adapt their argument. First, the case of composite objectives is discussed in their Appendix A.2. Second, their proof can also be directly adapted to the online case by noticing that the inequality (their Eq. (28))
$$
f^{(t)}(x_t)- f^{(t)}(x) \leq LD_h(x|x_{t-1}) - LD_h(x|x_t),\quad \forall x\in Q, \forall t\geq 1
$$
remains valid in the online case since it follows from the analysis of a single step. This implies in particular, by taking $x=x_{t-1}$, that $f^{(t)}(x_t)\leq f^{(t)}(x_{t-1})$. One then deduces Eq.~\eqref{eq:onlineMD} by summing this inequality for $t=1,\dots,T$ and using the fact that $D_h(\cdot|\cdot)\geq 0$. 
\end{proof}

In the main text, we apply this general guarantee to the case where $Q=\RR_+^{m\times n}$, the distance generating function is $h_e(\pi) = \sum_{i,j} \pi_{i,j}\log(\pi_{i,j})$ and the associated Bregman divergence is the Kullback-Leibler divergence $\KL(\pi|\gamma)=\sum_{i,j} \pi_{i,j}\log(\pi_{i,j}/\gamma_{i,j})-\pi_{i,j}+\gamma_{i,j}$ if $\bar \pi_{i,j}=0\Rightarrow \pi_{i,j}=0$ and $+\infty$ otherwise. In this context, we will make use of this simple but crucial property, proved e.g.~\cite[Lem.~6]{aubin2022mirror}:
\begin{lemma}\label{lem:relative-smoothness}
Let $p\in \Delta_{m}^*$. Then the function $G(\pi)= \KL(\pi\ones|p)$ is $1$-relatively smooth with respect to $h_e$ over $\Delta_{m\times n}$.
\end{lemma}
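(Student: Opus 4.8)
The plan is to verify the equivalent characterization of $1$-relative smoothness recalled in Appendix~\ref{sec:MD}, namely that $h_e-G$ is convex on $\Delta_{m\times n}$, where $h_e(\pi)=\sum_{i,j}\pi_{i,j}\log\pi_{i,j}$ (with the convention $0\log 0=0$). First I would simplify $G$ on the simplex. Since every $\pi\in\Delta_{m\times n}$ satisfies $\langle\ones,\pi\ones\rangle=1=\langle\ones,p\rangle$, the affine terms in $\KL(\pi\ones|p)=\sum_i(\pi\ones)_i\log((\pi\ones)_i/p_i)-(\pi\ones)_i+p_i$ cancel, and
\begin{align*}
G(\pi)=\sum_{i=1}^m(\pi\ones)_i\log(\pi\ones)_i-\langle\log p,\pi\ones\rangle .
\end{align*}
The second term is linear in $\pi$ (this is where $p\in\Delta_m^*$ enters, ensuring $\log p$ is finite), so it plays no role in convexity questions, and it suffices to prove convexity of $\pi\mapsto h_e(\pi)-\sum_i(\pi\ones)_i\log(\pi\ones)_i$.

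Next I would invoke the ``chain rule for entropy'': from $\pi_{i,j}=(\pi\ones)_i\cdot\frac{\pi_{i,j}}{(\pi\ones)_i}$ one gets the pointwise identity $\pi_{i,j}\log\pi_{i,j}=\pi_{i,j}\log(\pi\ones)_i+\pi_{i,j}\log\frac{\pi_{i,j}}{(\pi\ones)_i}$, and summing over $i,j$,
\begin{align*}
h_e(\pi)-\sum_{i=1}^m(\pi\ones)_i\log(\pi\ones)_i=\sum_{i,j}\pi_{i,j}\log\frac{\pi_{i,j}}{(\pi\ones)_i}=\sum_{i,j}\phi\big(\pi_{i,j},(\pi\ones)_i\big),
\end{align*}
where $\phi(x,y)=x\log(x/y)$ is the perspective of the convex function $t\mapsto t\log t$, hence jointly convex on $\RR_+\times\RR_+$ with the lower-semicontinuous conventions $\phi(0,0)=0$ and $\phi(x,0)=+\infty$ for $x>0$. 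For each fixed $(i,j)$ the map $\pi\mapsto(\pi_{i,j},(\pi\ones)_i)$ is linear, so each summand is convex in $\pi$, and therefore so is the sum. Combining this with the reduction above, $h_e-G$ is convex on $\Delta_{m\times n}$, which is the claim.

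I do not expect a genuine obstacle here; the only point requiring care is the behaviour on the boundary of the simplex, where some $\pi_{i,j}$ or $(\pi\ones)_i$ vanish, which is precisely why I would favour the perspective-function argument (valid on all of $\RR_+^{m\times n}$) over a Hessian computation. For completeness one can also give the interior argument: on $\Delta_{m\times n}^*$ one has $\nabla^2 h_e(\pi)=\diag(1/\pi_{i,j})$ while $\nabla^2 G(\pi)$ is block-diagonal with $i$-th block $(\pi\ones)_i^{-1}\ones\ones^\top$, and the inequality $\langle v,(\nabla^2h_e-\nabla^2G)(\pi)v\rangle\ge 0$ for all $v$ is, row by row, exactly the Cauchy--Schwarz bound $(\sum_j v_{i,j})^2\le\big(\sum_j v_{i,j}^2/\pi_{i,j}\big)\big(\sum_j\pi_{i,j}\big)=(\pi\ones)_i\sum_j v_{i,j}^2/\pi_{i,j}$; convexity on all of $\Delta_{m\times n}$ then follows from continuity of $h_e-G$. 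Finally, the differentiability of $G$ needed to phrase relative smoothness via Bregman divergences is immediate on $\Delta_{m\times n}^*$, so no additional work is required.
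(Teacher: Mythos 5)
Your proof is correct and complete. Note that the paper does not actually include a proof of this lemma; it simply cites \cite[Lem.~6]{aubin2022mirror}, so there is no paper-internal proof to compare against. Your argument is a clean, self-contained derivation, and it is worth recording the structure: you reduce to showing $h_e-G$ is convex (the definition of $1$-relative smoothness), observe that on the simplex the affine part of $\KL(\pi\ones|p)$ and the linear term $\langle \log p,\pi\ones\rangle$ drop out of the convexity question, and then use the entropy chain rule to rewrite
\begin{align*}
h_e(\pi)-\sum_i(\pi\ones)_i\log(\pi\ones)_i=\sum_{i,j}\pi_{i,j}\log\frac{\pi_{i,j}}{(\pi\ones)_i},
\end{align*}
a sum of perspective functions $\phi(x,y)=x\log(x/y)$ precomposed with linear maps, hence convex on all of $\RR_+^{m\times n}$ including the boundary. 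This is essentially the log-sum inequality viewpoint, and it is more robust than a pure Hessian calculation precisely because it handles vanishing entries gracefully. Your auxiliary Hessian/Cauchy--Schwarz computation on the interior is also correct and gives the same conclusion. Two minor remarks: the restriction to $\Delta_{m\times n}$ is not actually needed for the convexity of $h_e-G$ (the affine terms of $\KL$ do not affect convexity even without the normalization), so the relative smoothness in fact holds on all of $\RR_+^{m\times n}$; and the assumption $p\in\Delta_m^*$ is used only to ensure $\log p$ is finite so that $G$ is finite and differentiable on the interior, exactly as you observe.
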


\end{document}